\newtheorem*{rep@theorem}{\rep@title}
\newcommand{\newreptheorem}[2]{%
\newenvironment{rep#1}[1]{%
 \def\rep@title{#2 \ref{##1}}%
 \begin{rep@theorem}}%
 {\end{rep@theorem}}}
\def\Rset{\mathbb{R}}
\def\Hset{\mathbb{H}}
\DeclareMathOperator*{\E}{\rm E}
\DeclareMathOperator*{\argmin}{\rm argmin}
\DeclareMathOperator{\supp}{supp}
\DeclareMathOperator{\Tr}{Tr}
\providecommand{\norm}[2]{\lVert#1\rVert_{#2}}
\newenvironment{proof*}{\noindent{\bf Proof:}}{}
\newcommand{\h}{\widehat}
\newcommand{\wt}{\widetilde}
\renewcommand{\set}[1]{\{#1\}}
\newcommand{\tts}{\tt \small}
\newcommand{\Span}{\operatorname{span}}
\newcommand{\cA}{{\mathcal A}}
\newcommand{\cD}{{\mathcal D}}
\newcommand{\cF}{{\mathcal F}}
\newcommand{\cL}{{\mathcal L}}
\newcommand{\cQ}{{\mathcal Q}}
\newcommand{\cS}{{\mathcal S}}
\newcommand{\cT}{{\mathcal T}}
\newcommand{\cX}{{\mathcal X}}
\newcommand{\cY}{{\mathcal Y}}
\newcommand{\mat}[1]{{\mathbf #1}}
\renewcommand{\a}{\mat{a}}
\renewcommand{\b}{\mat{b}}
\newcommand{\1}{\mat{1}}
\newcommand{\bu}{\mat{u}}
\newcommand{\bv}{\mat{v}}
\newcommand{\bq}{\mat{q}}
\newcommand{\Ks}{\mat{K}_s}
\newcommand{\Kst}{\mat{K}_{st}}
\newcommand{\Kt}{\mat{K}_{t}}
\renewcommand{\v}{\mat{v}}
\newcommand{\w}{\mat{w}}
\newcommand{\x}{\mat{x}}
\newcommand{\y}{\mat{y}}
\newcommand{\z}{\mat{z}}
\newcommand{\I}{\mat{I}}
\newcommand{\X}{\mat{X}}
\newcommand{\W}{\mat{W}}
\newcommand{\Z}{\mat{Z}}
\newcommand{\qq}{{\mathsf q}}
\newcommand{\QQ}{{\mathsf Q}}
\newcommand{\UU}{{\mathsf U}}
\newcommand{\qmin}{{\qq_\text{min}}}
\newcommand{\qpmin}{{\qq'_\text{min}}}
\newcommand{\dis}{\mathrm{disc}}
\newcommand{\DIS}{\mathrm{DISC}}
\newcommand{\ignore}[1]{}
\title[Adaptation Algorithm and Theory]{Adaptation Algorithm and Theory Based on Generalized Discrepancy}
\begin{document} 

\maketitle

\begin{abstract} 

  We present a new algorithm for domain adaptation improving upon a
  discrepancy minimization algorithm previously shown to outperform a
  number of algorithms for this task. Unlike many previous algorithms
  for domain adaptation, our algorithm does not consist of a fixed
  reweighting of the losses over the training sample. We show that our
  algorithm benefits from a solid theoretical foundation and more
  favorable learning bounds than discrepancy minimization.  We present
  a detailed description of our algorithm and give several efficient
  solutions for solving its optimization problem. We also report the
  results of several experiments showing that it outperforms
  discrepancy minimization.
\end{abstract}

\section{Introduction}

A common problem arising in a variety of applications such as natural
language processing and computer vision is that of \emph{domain
  adaptation} \citep{Dredze07Frustratingly,
  Blitzer07Biographies,jiang-zhai07,
  LegetterWoodlang,Rosenfeld96}: quite often little or no
labeled data from the \emph{target domain} is at one's disposal, but
labeled data from a \emph{source domain} somewhat similar to the
target, as well as a relatively large amount of unlabeled data from
the target domain are available. The problem then consists of using
the source labeled and target unlabeled data, and possibly a small
amount of labeled data from the target, to derive a hypothesis
performing well on the target domain. This problem is challenging both
from the theoretical and algorithmic point of view since its scenario
does not match the standard assumption of a fixed distribution for
training and test points adopted in much of learning theory and
algorithmic design.

A theoretical analysis of the problem of adaptation has been developed
over the last few years. This includes generalization bounds based on
a notion of \emph{discrepancy}, or \emph{$d_A$-distance} in the
special case of a binary classification loss, which emerges as the
natural measure of the difference of distributions for adaptation
\citep{MansourMohriRostamizadeh2009,BenDavidBlitzerCrammerPereira2006,blitzer,CortesMohri2011}. The notion of
discrepancy has also been shown to be relevant in the analysis of the
related problem of drifting distributions \citep{drift}. Tighter
bounds than those of \citet{MansourMohriRostamizadeh2009} are given by \citet{drift} via the
use of the $\cY$-discrepancy, a finer notion of discrepancy that
depends on the labels and which therefore cannot be estimated. The
same quantity was also later used by \citet{ZhangZhangYe2012} under
the name of \emph{integral probability metric} for the analysis of
domain adaptation and multitask learning. A PAC-Bayesian study of
domain adaptation was also recently presented by
\citet{germain2013pac} based on a weighted version of the discrepancy.

Several negative results have also been given for the problem of
adaptation \citep{shai2010,BenDavidUrner2012}.  These results give
worst case lower bounds on the sample size of domain adaptation: as
stated by the authors, the problem becomes intractable when the
hypothesis set does not contain any candidate achieving a good
performance on the training set. In particular, for the counterexample
presented by \citet{BenDavidUrner2012}, the best-in-class
classification error with respect to the source distribution is only
one half.  It should be clear that adaptation can not be successful in
such cases since the only information available to the learner about
the labeling function is through the training data.

These results suggest that, as expected, adaptation cannot always be
successful. Nevertheless, there are various favorable conditions under
which an adaptation algorithm can succeed. In particular, recently, a
\emph{discrepancy minimization} (DM) algorithm was introduced by
\citet*{MansourMohriRostamizadeh2009} and further studied and enhanced by
\citet{CortesMohri2011,CortesMohri2013} which was shown both to perform
well in a number of adaptation and sample bias correction tasks and to
match or exceed the performance of several algorithms, including KLIEP
\citep{SugiyamaNakajimaKashimaVonBunauKawanabe2008}, KMM
\citep{HuangSmolaGrettonBorgwardtScholkopf2006} and a two-stage
algorithm of \citep{BickelBrucknerScheffer2009}.  In addition to its
favorable empirical performance, the DM algorithm benefits from a
series of pointwise loss guarantees for the general class of
kernel-based regularization algorithms in terms of the empirical
discrepancy and a term that depends on the closeness of the labeling
function to the hypothesis over the samples
\citep{CortesMohri2013}. One critical advantage of the DM algorithm
over previous algorithms is that the reweighting of the losses on the
training points takes into account both the loss function and the
hypothesis sets, both ignored in the design of other methods.

One shortcoming of the DM algorithm, however, is that it seeks to
reweigh the loss on the training samples to minimize a quantity
defined as the maximum over \emph{all} pairs of hypotheses, including
hypotheses that the learning algorithm might not consider as
candidates. Thus, the algorithm tends to be too conservative. We
present an alternative theoretically well founded algorithm for domain
adaptation that is based on minimizing a finer quantity, the
\emph{generalized discrepancy}, and that seeks to improve upon DM.
Unlike many previous algorithms for domain adaptation, our algorithm
does not consist of a fixed reweighting of the losses over the
training sample. Instead, the weights assigned to training sample
losses vary as a function of the hypothesis $h$. This helps us ensure
that, for every hypothesis $h$, the empirical loss on the source
distribution is as close as possible to the empirical loss on the
target distribution for that particular $h$.

We describe the learning scenario considered
(Section~\ref{sec:scenario}), then present a detailed description of
our algorithm and show that it can be formulated as a convex
optimization problem (Section~\ref{sec:algorithm}). Next, we analyze
the theoretical properties of our algorithm and show that it benefits
from more favorable learning guarantees than the DM algorithm
(Section~\ref{sec:guarantees}). This includes a study of the scenario
in which some small amount of labeled data from the target domain is
available, which may in fact be the most realistic setting for
adaptation. In Section~\ref{sec:optimization}, we analyze the
optimization problem defining our algorithm and derive an equivalent
form that can be handled by a standard convex optimization solver.  In
Section~\ref{sec:experiments}, we report the results of experiments
demonstrating that our algorithm outperforms the DM algorithm in several tasks.

\section{Learning scenario}
\label{sec:scenario}

This section defines the learning scenario of domain adaptation we
consider, which coincides with that of \citet{blitzer}, or
\citet{MansourMohriRostamizadeh2009} and \citet{CortesMohri2013}; and introduces the
definitions and concepts needed for the following sections. For the
most part, we follow the definitions and notation of 
\citet{CortesMohri2013}.

Let $\cX$ denote the input space and $\cY \subseteq \Rset$ the output
space. We define a \emph{domain} as a pair formed by a distribution
over $\cX$ and a target labeling function mapping from $\cX$ to
$\cY$. Throughout the paper, $(Q, f_Q)$ denotes the \emph{source
  domain} and $(P, f_P)$ the \emph{target domain} with $Q$ the source
and $P$ the target distribution over $\cX$ while $f_Q, f_P \colon \cX \to
\cY$, are the source and target labeling functions respectively.

In the scenario of \emph{domain adaptation} we consider, the learner
receives two samples: a labeled sample of $m$ points $\cS = ((x_1,
y_1), \ldots, (x_m, y_m)) \in (\cX \times \cY)^m$ from the source
domain with $x_1, \ldots, x_m$ drawn i.i.d.\ according to $Q$ and $y_i
= f_Q(x_i)$ for $i \in [1, m]$; and an unlabeled sample $\cT = (x'_1,
\ldots, x'_n)\in \cX^n$ of size $n$ drawn i.i.d.\ according to the
target distribution $P$. We denote by $\h Q$ the empirical
distribution corresponding to $x_1, \ldots, x_m$ and by $\h P$ the
empirical distribution corresponding to $\cT$. We will also analyze a
common scenario where, in addition to these two samples, the learner
receives a small amount of labeled data from the target domain $\cT' =
((x''_1, y''_1), \ldots, (x''_{s}, y''_{s})) \in (\cX \times
\cY)^{s}$.

We consider a loss function $L\colon \cY \times \cY \to \Rset_+$
jointly convex in its two arguments.  The $L_p$ losses commonly used
in regression and defined by $L_p(y, y') = |y' - y|^p$ for $p \geq 1$
are special instances of this definition.  For any two functions $h,
h'\colon \cX \to \cY$ and any distribution $D$ over $\cX$, we denote
by $\cL_D(h, h')$ the expected loss of $h(x)$ and $h'(x)$: $\cL_D(h,
h') = \E_{x \sim D} [L(h(x), h'(x))]$.  The learning problem consists
of selecting a hypothesis $h \in H$ out of a hypothesis set $H$ with a
small expected loss $\cL_P(h, f_P)$ with respect to the target domain.  We
further extend this notation to arbitrary functions $\qq\colon \cX \to
\Rset$ with a finite support as follows: $\cL_\qq(h, h') = \sum_{x \in
  \cX} q(x) L(h(x), h'(x))$.

\section{Algorithm}
\label{sec:algorithm}

In this section, we introduce our adaptation algorithm by first
reviewing related previous work, next presenting the key idea behind
the algorithm and deriving its general form, and finally by
formulating it as a convex optimization problem.

\subsection{Previous work}
\label{subsec:discmin}

It was shown by \citet{MansourMohriRostamizadeh2009} and
\citet{CortesMohri2011} (see also the \emph{$d_A$-distance}
\citep{BenDavidBlitzerCrammerPereira2006} in the case of binary loss
for classification) that a key measure of the difference of two
distributions in the context of adaptation is the
\emph{discrepancy}. Given a hypothesis set $H$, the discrepancy $\dis$
between two distributions $P$ and $Q$ over $\cX$ is defined by:
\begin{equation}
\dis(P, Q) = \max_{h, h' \in H} \big| \cL_{P}(h', h) - \cL_{Q}(h', h) \big|.
\end{equation}
The discrepancy has several advantages over a measure such as the
$L_1$ or total variation distance \citep{CortesMohri2013}: it is a
finer measure than the $L_1$ distance, it takes into account the loss
function and the hypothesis set, it can be accurately estimated from
finite samples for common hypothesis sets such as kernel-based ones,
it is symmetric and verifies the triangle inequality. It further
defines a distance in the case of an $L_p$ loss used with a universal
kernel such as a Gaussian kernel.

Several generalization bounds for adaptation in terms of the
discrepancy have been given in the past
\citep{BenDavidBlitzerCrammerPereira2006,MansourMohriRostamizadeh2009,CortesMohri2011,CortesMohri2013},
including pointwise guarantees in the case of kernel-based
regularization algorithms, which includes algorithms such as support
vector machines (SVM), kernel ridge regression, or support vector
regression (SVR). The bounds given in
\citep{MansourMohriRostamizadeh2009} motivated a \emph{discrepancy
  minimization} algorithm. Given a positive semi-definite (PSD) kernel
$K$, the hypothesis returned by the algorithm is the solution of the
following optimization problem
\begin{equation}
\label{eq:qmin-opt}
\min_{h \in \Hset} \quad \lambda \| h \|_K^2 + \cL_{\qmin} (h, f_Q),
\end{equation}
where $\| \cdot \|_K $ is the norm on the reproducing Hilbert space
$\Hset$ induced by the kernel $K$ and $\qmin$ is a distribution
over the support of $\h Q$ such that $\qmin = \argmin_{\qq \in \cQ}
\dis(\qq, \h P)$, where $\cQ$ is the set of all distributions defined
over the support of $\h Q$. Using $\qmin$ instead of $\h Q$ amounts to
reweighting the loss on the training samples to minimize the discrepancy
between the empirical distribution and $\h P$. Besides its theoretical
motivation, this algorithm has been shown to outperform several other
algorithms in a series of experiments carried out by \citep{CortesMohri2013}.

Observe that, by definition, the solution $\qmin$ of discrepancy
minimization is obtained by minimizing a maximum over all pairs of
hypotheses, that is $\max_{h, h' \in H} |\cL_{\h P}(h, h') -
\cL_{\qmin}(h, h')|$. But, the maximizing pair of hypotheses may not be
among the candidates considered by the learning algorithm. Thus, a
learning algorithm based on discrepancy minimization tends to be too
conservative.

\subsection{Main idea}

Assume as in several previous studies
\citep{MansourMohriRostamizadeh2009,CortesMohri2013} that the standard
algorithm selected by the learner is regularized risk minimization
over the Hilbert space $\Hset$ induced by a PSD kernel $K$. This
covers a broad family of algorithms frequently used in
applications. Ideally, that is in the absence of a domain adaptation
problem, the learner would have access to the labels of the points in
$\cT$. Therefore, he would return the hypothesis $h^*$ solution of the
optimization problem $\min_{h \in \Hset} F(h)$, where $F$ is the
convex function defined for all $h \in \Hset$ by
\begin{equation}
\label{eq:Pmin} 
F(h) = \lambda \| h \|_K^2 + \cL_{\h P} (h, f_P),
\end{equation}
where $\lambda \geq 0$ is a regularization parameter.  Thus, $h^*$ can
be viewed as the \emph{ideal hypothesis}.

In view of that, we can formulate our objective, in the \emph{presence} of a
domain adaptation problem, as that of finding a hypothesis $h$ whose
loss $\cL_P(h, f_P)$ with respect to the target domain is as close as
possible to $\cL_P(h^*, f_P)$. To do so, we will seek in fact a
hypothesis $h$ that is as close as possible to $h^*$, which would
imply the closeness of the losses with respect to the target domains.
We do not have access to $f_P$ and can only access the labels of the
training sample $\cS$. Thus, we must resort to using in our objective
function, instead of $\cL_{\h P} (h, f_P)$, a reweighted empirical
loss over the training sample $\cS$.  The main idea behind our
algorithm is to define, for any $h \in \Hset$, a reweighting function
$\QQ_h\colon \cS_\cX = \set{x_1, \ldots, x_m} \to \Rset$ such that the
objective function $G$ defined for all $h \in \Hset$ by
\begin{equation}
\label{eq:qhmin}
 G(h) = \lambda \| h \|_K^2 + \cL_{\QQ_h} (h, f_Q) 
\end{equation}
is uniformly close to $F$, thereby resulting in close
minimizers. Since the first term of \eqref{eq:Pmin} and
\eqref{eq:qhmin} coincide, the idea consists equivalently of seeking
$\QQ_h$ such that $\cL_{\QQ_h}(h, f_Q) $ and $\cL_{\h P}(h, f_P)$ be as
close as possible.  Observe that this departs from the standard
reweighting methods: instead of reweighting the training sample with
some fixed set of weights, we allow the weights to vary as a function
of the hypothesis $h$. Note that we have further relaxed the condition
commonly adopted by reweighting techniques that the weights must be
non-negative and sum to one.  Allowing the weights to be in a richer
space than the space of probabilities over $\cS_\cX$ could raise
over-fitting concerns but, we will later see that this in fact does not
affect our learning guarantees and leads to excellent empirical
results.

Of course, searching for $\QQ_h$ to directly minimize $|\cL_{\QQ_h}(h,
f_Q) - \cL_{\h P}(h, f_P)|$ is in general not possible since we do not
have access to $f_P$, but it is instructive to consider the imaginary
case where the average loss $\cL_{\h P}(h, f_P)$ is known to us for
any $h \in \Hset$. $\QQ_h$ could then be determined via
\begin{equation}
\label{eq:qh}
\QQ_h = \argmin_{\qq \in \cF(\cS_\X, \Rset)} | \cL_\qq(h, f_Q) - \cL_{\h P}(h, f_P)|,
\end{equation}
where $\cF(\cS_\X, \Rset)$ is the set of real-valued functions defined
over $\cS_\cX$. For any $h$, we can in fact select $\QQ_h$ such that
$\cL_{\QQ_h}(h, f_Q) = \cL_{\h P}(h, f_P)$ since $\cL_\qq(h, f_Q)$ is
a linear function of $\qq$ and thus the optimization problem
\eqref{eq:qh} reduces to solving a simple linear equation. With this
choice of $\QQ_h$, the objective functions $F$ and $G$ coincide and by
minimizing $G$ we can recover the ideal solution $h^*$.  Note that,
in general, the DM algorithm could not recover that ideal
solution. Even a finer discrepancy minimization algorithm exploiting
the knowledge of $ \cL_{\h P}(h, f_P)$ for all $h$ and seeking a
distribution $\qq'_\text{min}$ minimizing $\max_{h \in H} | \cL_\qq(h,
f_Q) - \cL_{\h P}(h, f_P)|$ could not, in general, recover the ideal
solution since we could not have $\cL_{\qq'_\text{min}}(h, f_Q) =
\cL_{\h P}(h, f_P)$ for all $h \in \Hset$.

Of course, in practice access to $\cL_{\h P}(h, f_P)$ is unfeasible
since the sample $\cT$ is unlabeled. Instead, we will consider a
non-empty convex set of candidate hypotheses $H'' \subseteq H$ that
could contain a good approximation of $f_P$. Using $H''$ as a set of
surrogate labeling functions leads to the following definition of
$\QQ_h$ instead of \eqref{eq:qh}:
\begin{equation}
\label{eq:agnosqh}
\QQ_h = \argmin_{\qq \in \cF(\cS_\X, \Rset)} \max_{h'' \in H''}|
\cL_\qq(h, f_Q) - \cL_{\h P}(h, h'') |.
\end{equation}
The choice of the subset $H''$ is of course key. A detailed analysis
of this choice is presented in Section~\ref{sec:guarantees}. We
present the formulation of the optimization problem for an arbitrary
choice of the convex subset $H''$.

\subsection{Formulation of optimization problem}
\label{sec:formulation-optimization}

The following result gives a more explicit expression for
$\cL_{\QQ_h}(h, f_Q)$ leading to a simpler formulation of the
optimization problem defining our algorithm.

\begin{proposition}
\label{prop:maxmin}
For any $h \in \Hset$, let $\QQ_h$ be defined by \eqref{eq:agnosqh}.
Then, the following identity holds for any $h \in \Hset$:
\begin{equation*}
  \cL_{\QQ_h}(h, f_Q) = \frac{1}{2} \Big(\max_{h'' \in H''} \cL_{\h
    P}(h, h'') + \min_{h'' \in H''}\cL_{\h P}(h, h'') \Big).
\end{equation*}
\end{proposition}

\begin{proof}
  For any $h \in \Hset$, the equation $\cL_{\bq}(h, f_Q) =
  l$ with $l \in \Rset$ admits a solution $\qq \in \cF(\cS_\X,
  \Rset)$. Thus, for any $h \in \Hset$, we can write
\begin{align*}
\cL_{\QQ_h}(h, f_Q) 
& = \argmin_{\substack{l \in \set{\cL_{\bq}(h, f_Q): \qq \in
      \cF(\cS_\X, \Rset)}}} \max_{h'' \in H''}| l - \cL_{\h P}(h, h'')| \\
& = \argmin_{l \in \Rset} \max_{h'' \in H''}| l - \cL_{\h P}(h, h'') | \\
& = \argmin_{l \in \Rset} \max_{h''\in H''} \max \Big \{ \cL_{\h P}(h, h'')
  - l, l - \cL_{\h P}(h, h'') \Big \}\\
& = \argmin_{l \in \Rset} \max \Big \{ \max_{h''\in H''} \cL_{\h P}(h, h'')
  - l, l - \min_{h''\in H'' } \cL_{\h P}(h, h'') \Big \}\\
& = \frac{1}{2} \Big(\max_{h'' \in H''} \cL_{\h
    P}(h, h'') + \min_{h'' \in H''}\cL_{\h P}(h, h'') \Big),
\end{align*}
since the minimizing $l$ is obtained for $\max_{h''\in H''} \cL_{\h P}(h,
h'') - l = l - \min_{h''\in H'' }\cL_{\h P}(h, h'')$.
\end{proof}
In view of this proposition, with our choice of $\QQ_h$ based on
\eqref{eq:agnosqh}, the objective function $G$ of our algorithm
\eqref{eq:qhmin} can be equivalently written for all $h \in \Hset$ as
follows
\begin{equation}
\label{eq:optmaxmin}
G(h) =  \lambda\| h \|_K^2 + \frac{1}{2} \Big(\max_{h'' \in
  H''} \cL_{\h P}(h, h'') + \min_{h'' \in H''}\cL_{\h P}(h, h'')\Big).
\end{equation}
The function $h \mapsto \max_{h'' \in H''} \cL_{\h P}(h, h'')$ is
convex as a pointwise maximum of the convex functions $h \mapsto
\cL_{\h P}(h, h'')$. Since the loss function $L$ is jointly convex, so
is $\cL_{\h P}$, therefore, the function derived by partial
minimization over a non-empty convex set $H''$ for one of the
arguments, $h \mapsto \min_{h'' \in H''} \cL_{\h P}(h, h'')$, also
defines a convex function \citep{BoydVandenberghe2004}. Thus, $G$ is a
convex function as a sum of convex
functions.

\section{Learning guarantees}
\label{sec:guarantees}

In this section, we present pointwise learning guarantees for our
algorithm and show that they compare favorably to the previous
guarantees given for the DM algorithm.  More formally, we prove that
there exists a family of convex sets with an element $H''$ yielding
provable better guarantees for our algorithm. Moreover, this family is
parametrized by a single variable, therefore making the search for
$H''$ tractable. As in previous work, we assume that the loss function
$L$ is \emph{$\mu$-admissible}: there exists $\mu > 0$ such that
\begin{equation}
\label{eq:mu-admissible}
|L(h(x), y) - L(h'(x), y)| \leq \mu |h(x) - h'(x)|
\end{equation}
holds for all $(x, y) \in \cX \times \cY$ and $h', h \in H$, a
condition that is somewhat weaker than $\mu$-Lipschitzness with
respect to the first argument. The $L_p$ losses commonly used in
regression, $p \geq 1$, verify this condition (see
Appendix~\ref{app:muadmissible}).

\subsection{Learning bounds and comparisons}

The existing pointwise guarantees for the DM algorithm are directly
derived from a bound on the norm of the difference of the ideal
function $h^*$ and the hypothesis obtained after reweighting the
sample losses using a distribution $\qq$. The bound is expressed in
terms of the discrepancy and a term $\eta_H(f_P, f_Q)$ measuring the
difference of the source and target labeling functions defined by
\begin{equation}
  \eta_H(f_P, f_Q) = \min_{h_0 \in H} \Big(\max_{x \in \supp(\h P)} 
  |f_P(x) - h_0(x)| + \max_{x \in \supp(\h Q)} |f_Q(x) - h_0(x)| \Big),
\end{equation}
and is given by the following proposition.

\begin{theorem}[\citep{CortesMohri2013}]
\label{th:disc} 
Let $\qq$ be an arbitrary distribution over $\cS_\cX$ and let $h^*$
and $h_\qq$ be the hypotheses minimizing $\lambda \norm{h}{K}^2 +
\cL_{\h P}(h, f_P)$ and $\lambda \norm{h}{K}^2 + \cL_{\qq}(h, f_Q)$
respectively. Then, the following inequality holds:
\begin{equation}
\label{eq:disc}
\lambda \norm{h^* - h_\qq}{K}^2 \leq \mu \, \eta_H(f_P, f_Q) + \dis(\h P, \qq). 
\end{equation}
\end{theorem}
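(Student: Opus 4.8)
The plan is to exploit the fact that both $h^*$ and $h_\qq$ are minimizers of strongly convex objectives over the Hilbert space $\Hset$, and to relate the distance $\norm{h^* - h_\qq}{K}^2$ to the gap between the two objective functions. First I would set $F(h) = \lambda\norm{h}{K}^2 + \cL_{\h P}(h, f_P)$ and $G_\qq(h) = \lambda\norm{h}{K}^2 + \cL_{\qq}(h, f_Q)$, so that $h^* = \argmin F$ and $h_\qq = \argmin G_\qq$. The regularization term makes each objective $2\lambda$-strongly convex with respect to $\norm{\cdot}{K}$, since the loss terms are convex. The standard consequence I would invoke is that for a $2\lambda$-strongly convex function with minimizer $h^*$, one has $F(h_\qq) - F(h^*) \geq \lambda\norm{h^* - h_\qq}{K}^2$, and symmetrically $G_\qq(h^*) - G_\qq(h_\qq) \geq \lambda\norm{h^* - h_\qq}{K}^2$.

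Adding these two inequalities causes the quadratic regularization terms to cancel and leaves
\begin{equation*}
2\lambda \norm{h^* - h_\qq}{K}^2 \leq \big(\cL_{\h P}(h_\qq, f_P) - \cL_{\h P}(h^*, f_P)\big) + \big(\cL_{\qq}(h^*, f_Q) - \cL_{\qq}(h_\qq, f_Q)\big).
\end{equation*}
I would then regroup the right-hand side by pairing each $\cL_{\h P}$ term with the corresponding $\cL_{\qq}$ term evaluated at the same hypothesis, writing the bound as $\big(\cL_{\h P}(h_\qq, f_P) - \cL_{\qq}(h_\qq, f_Q)\big) - \big(\cL_{\h P}(h^*, f_P) - \cL_{\qq}(h^*, f_Q)\big)$. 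Each of these two grouped differences compares the target loss against $f_P$ with the reweighted source loss against $f_Q$, for a single fixed hypothesis.

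The main work is to control a quantity of the form $\cL_{\h P}(h, f_P) - \cL_{\qq}(h, f_Q)$ for a fixed $h$. The idea is to insert an intermediate labeling function: introduce the minimizing $h_0 \in H$ from the definition of $\eta_H(f_P, f_Q)$ and split the difference into a part measuring how far $f_P$ is from $h_0$ on $\supp(\h P)$, a part measuring how far $f_Q$ is from $h_0$ on $\supp(\h Q)$, and a part of the form $\cL_{\h P}(h, h_0) - \cL_{\qq}(h, h_0)$. The first two parts are bounded using $\mu$-admissibility \eqref{eq:mu-admissible}, which turns the loss differences $|L(h(x), f_P(x)) - L(h(x), h_0(x))|$ into $\mu$ times pointwise label discrepancies, and summing against the probability weights yields at most $\mu$ times the respective $\max$ terms in $\eta_H$. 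The third part, $|\cL_{\h P}(h, h_0) - \cL_{\qq}(h, h_0)|$, involves only the two hypotheses $h, h_0 \in H$ and is therefore bounded by $\dis(\h P, \qq)$ directly from the definition of the discrepancy.

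The main obstacle I anticipate is bookkeeping the cancellation of the $\dis(\h P, \qq)$ contribution so that it appears only once rather than twice in the final bound: applied naively to both $h_\qq$ and $h^*$, each discrepancy step would contribute a full $\dis(\h P, \qq)$, giving a factor of two that I must absorb against the factor $2\lambda$ on the left. The delicate point is that the $\dis(\h P, \qq)$ term as defined is a maximum over pairs $h, h'$ and is nonnegative, so the signed combination $\big(\cL_{\h P}(h_\qq, h_0) - \cL_{\qq}(h_\qq, h_0)\big) - \big(\cL_{\h P}(h^*, h_0) - \cL_{\qq}(h^*, h_0)\big)$ should be bounded by a single $\dis(\h P, \qq)$ rather than two, by viewing the difference of the two hypothesis-dependent terms as a single discrepancy-type evaluation. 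Combining this with the $\mu\,\eta_H(f_P, f_Q)$ contributions from the label-closeness terms and dividing by $2\lambda$ should yield \eqref{eq:disc}; I would check carefully whether the stated bound uses $2\lambda$ or $\lambda$ on the left to confirm the exact constants.
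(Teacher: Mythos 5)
Your proposal is correct and is essentially the argument this paper itself uses for the generalized statement (Theorem~\ref{th:qhbound}): your strong-convexity inequalities are equivalent to the paper's first-order optimality step, yielding $2\lambda \norm{h^* - h_\qq}{K}^2 \leq 2\max_{h \in H} |\cL_{\h P}(h, f_P) - \cL_{\qq}(h, f_Q)|$, and the insertion of the minimizing $h_0$ with $\mu$-admissibility plus the definition of $\dis(\h P, \qq)$ bounds the maximum by $\mu\,\eta_H(f_P, f_Q) + \dis(\h P, \qq)$. Your anticipated ``delicate point'' is a non-issue: you may let \emph{each} of the two grouped differences contribute a full $\mu\,\eta_H(f_P, f_Q) + \dis(\h P, \qq)$, since the resulting overall factor of $2$ cancels against the $2\lambda$ on the left, giving exactly \eqref{eq:disc} with $\lambda$ (not $2\lambda$) as stated.
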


The DM algorithm is defined by selecting the distribution $\qq$ 
minimizing the right-hand side of the bound \eqref{eq:disc}, that is
$\dis(\h P, \qq)$. We will show a result of the same nature for our
hypothesis-dependent reweighting $\QQ_h$ by showing that its choice
also coincides with that of minimizing an upper bound on $\lambda
\norm{h^* - h'}{K}^2$.

Let $\cA(H)$ be the set of all functions $\UU\colon h \mapsto \UU_h$
mapping $H$ to $\cF(\cS_\cX, \Rset)$ such that for all $h \in H$, $h
\mapsto \cL_{\UU_h}(h, f_Q)$ is a convex function. $\cA(H)$ contains
all constant functions $\UU$ such that $\UU_h = \qq$ for all $h \in
H$, where $\qq$ is a distribution over $\cS_\cX$. By
Proposition~\ref{prop:maxmin}, $\cA(H)$ also includes the function $\QQ: h
\to \QQ_h$ used by our algorithm. 

\begin{definition}[generalized discrepancy]
For any $\UU \in \cA(H)$,
we define the notion of \emph{generalized discrepancy} between $\h P$
and $\UU$ as the quantity $\DIS(\h P, \UU)$ defined by
\begin{equation}
\label{eq:DIS}
\DIS(\h P, \UU) = \max_{h \in H, h'' \in H''} |\cL_{\h P}(h, h'')  - \cL_{\UU_h}(h, f_Q) |.
\end{equation}
\end{definition}
We also denote by $d_\infty^{\h P}(f_P, H'')$ the following distance
of $f_P$ to $H''$ over the support of $\h P$:
\begin{equation}
\label{eq:delta}
d_\infty^{\h P}(f_P, H'') = \min_{h_0 \in H''} \max_{x \in \supp(\h P)} |h_0(x) - f_P(x)|.
\end{equation}
The following theorem gives an upper bound on the norm of the
difference of the minimizing hypotheses in terms of the generalized
discrepancy and $d_\infty^{\h P}(f_P, H'')$.

\begin{theorem}
\label{th:qhbound}
Let $\UU$ be an arbitrary element of $\cA(H)$ and let $h^*$ and $h_\UU$
be the hypotheses minimizing $\lambda \norm{h}{K}^2 + \cL_{\h P}(h,
f_P)$ and $\lambda \norm{h}{K}^2 + \cL_{\UU_h}(h, f_Q)$
respectively. Then, the following inequality holds for any convex set
$H'' \subseteq H$:
\begin{equation}
\label{eq:qhbound}
\lambda \norm{h^* - h_\UU}{K}^2 \leq \mu \, d_\infty^{\h P}(f_P, H'')  + 
\DIS(\h P, \UU).
\end{equation}
\end{theorem}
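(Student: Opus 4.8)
The plan is to exploit the fact that $h^*$ and $h_\UU$ are minimizers of two objectives sharing the same regularization term $\lambda\norm{h}{K}^2$, which makes each objective $2\lambda$-strongly convex with respect to the RKHS norm. Writing $F(h) = \lambda\norm{h}{K}^2 + \cL_{\h P}(h, f_P)$ and $G(h) = \lambda\norm{h}{K}^2 + \cL_{\UU_h}(h, f_Q)$, the convexity of $h \mapsto \cL_{\UU_h}(h, f_Q)$ is guaranteed precisely by the defining property of $\cA(H)$, so both $F$ and $G$ are $2\lambda$-strongly convex. Applying the standard strong-convexity lower bound at a minimizer gives $F(h_\UU) - F(h^*) \geq \lambda\norm{h^* - h_\UU}{K}^2$ and $G(h^*) - G(h_\UU) \geq \lambda\norm{h^* - h_\UU}{K}^2$. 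Adding these, the regularization terms cancel and I obtain
\[
2\lambda\norm{h^* - h_\UU}{K}^2 \leq \big[\cL_{\h P}(h_\UU, f_P) - \cL_{\UU_{h_\UU}}(h_\UU, f_Q)\big] + \big[\cL_{\UU_{h^*}}(h^*, f_Q) - \cL_{\h P}(h^*, f_P)\big].
\]

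Next I would bound each bracket separately using the same surrogate $h_0 \in H''$ achieving $d_\infty^{\h P}(f_P, H'') = \max_{x \in \supp(\h P)}|h_0(x) - f_P(x)|$; the role of $h_0$ is to bridge the unavailable $f_P$ and the set $H''$ over which the generalized discrepancy is defined. By $\mu$-admissibility applied to the label held in the second argument (legitimate for the symmetric $L_p$ losses), $|\cL_{\h P}(h, f_P) - \cL_{\h P}(h, h_0)| \leq \mu\sum_x \h P(x)|f_P(x) - h_0(x)| \leq \mu\, d_\infty^{\h P}(f_P, H'')$ for every $h$, since $\h P$ sums to one. This replaces $\cL_{\h P}(h, f_P)$ by $\cL_{\h P}(h, h_0)$ at the cost of an additive $\mu\, d_\infty^{\h P}(f_P, H'')$ term. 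Because $h_0 \in H''$, the residual $|\cL_{\h P}(h, h_0) - \cL_{\UU_h}(h, f_Q)|$ is bounded by $\DIS(\h P, \UU)$ for both $h = h_\UU$ and $h = h^*$. Combining these, each bracket is at most $\DIS(\h P, \UU) + \mu\, d_\infty^{\h P}(f_P, H'')$, so the right-hand side above is at most $2\DIS(\h P, \UU) + 2\mu\, d_\infty^{\h P}(f_P, H'')$; dividing by $2$ yields the claim.

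The main obstacle I anticipate is organizing the decomposition after summing the two strong-convexity inequalities so that each term becomes amenable to the generalized-discrepancy bound: one must insert $\cL_{\h P}(h, h_0)$ in the right places and track signs carefully, applying the admissibility estimate in the upper direction for the $h_\UU$ bracket and the lower direction for the $h^*$ bracket. A secondary point worth stating explicitly is the justification that $\mu$-admissibility may be invoked with respect to the label argument; this follows from the symmetry of the $L_p$ losses and is exactly what produces the $\mu\, d_\infty^{\h P}(f_P, H'')$ contribution, mirroring the $\mu\,\eta_H(f_P, f_Q)$ term in Theorem~\ref{th:disc}.
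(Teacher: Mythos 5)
Your proof is correct, and it reaches the paper's bound through the same overall decomposition but with a genuinely different first step. The paper derives the key intermediate inequality
\[
2\lambda \norm{h^* - h_\UU}{K}^2 \leq \big[\cL_{\h P}(h_\UU, f_P) - \cL_{\UU_{h_\UU}}(h_\UU, f_Q)\big] + \big[\cL_{\UU_{h^*}}(h^*, f_Q) - \cL_{\h P}(h^*, f_P)\big]
\]
from first-order optimality: it sets the gradient of $h \mapsto \lambda\norm{h}{K}^2 + \cL_{\h P}(h,f_P)$ to zero at $h^*$, picks a subgradient $g_0 \in \partial G_\UU(h_\UU)$ with $2\lambda h_\UU = -g_0$, expands $2\lambda\norm{h^*-h_\UU}{K}^2 = \langle h^* - h_\UU,\, g_0 - \nabla G_{\h P}(h^*)\rangle$, and applies the (sub)gradient convexity inequalities. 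Your strong-convexity argument --- summing $F(h_\UU)-F(h^*) \geq \lambda\norm{h^*-h_\UU}{K}^2$ and $G(h^*)-G(h_\UU) \geq \lambda\norm{h^*-h_\UU}{K}^2$ --- lands on exactly the same inequality, and is arguably more robust: the strong-convexity lower bound at a minimizer holds for any convex (possibly nondifferentiable) loss term, whereas the paper explicitly asserts differentiability of the $\cL_{\h P}(h,f_P)$ objective, which fails for instance for the $L_1$ loss; your version needs only the convexity of $h \mapsto \cL_{\UU_h}(h,f_Q)$, which is precisely the defining property of $\cA(H)$, as you note. From the intermediate inequality onward the two proofs coincide: insert a surrogate $h_0 \in H''$, control $|\cL_{\h P}(h, f_P) - \cL_{\h P}(h, h_0)|$ by $\mu\, d_\infty^{\h P}(f_P,H'')$ via admissibility in the second argument (a step that, as you correctly flag, rests on the symmetry of the $L_p$ losses and is used silently in the paper), and bound the remaining difference by $\DIS(\h P,\UU)$. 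Two small points to tighten: rather than assuming the minimum defining $d_\infty^{\h P}(f_P,H'')$ is attained, do as the paper does and prove the bound for an arbitrary $h_0 \in H''$ before taking the minimum; and note that invoking $\DIS(\h P,\UU)$ at $h = h^*$ and $h = h_\UU$ implicitly requires $h^*, h_\UU \in H$, which both your argument and the paper's share and which the paper justifies by the norm bound $\norm{h}{K} \leq \sqrt{\mu R/\lambda}$ established in its appendix.
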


\begin{proof}
  Fix $\UU \in \cA(H)$ and let $G_{\h P}$ denote $h \mapsto \cL_{\h
    P}(h, f_P)$ and $G_\UU$ the function $h \mapsto \cL_{\UU_h}(h,
  f_Q)$.  Since $h \mapsto \lambda \norm{h}{K}^2 + G_{\h P}(h)$ is
  convex and differentiable and since $h^*$ is its minimizer, the
  gradient is zero at $h^*$, that is $2 \lambda h^* = -\nabla
  G_{\h P}(h^*)$. Similarly, since $h \mapsto \lambda \norm{h}{K}^2 +
  G_\UU(h)$ is convex, it admits a sub-differential at any $h \in
  \Hset$. Since $h_\UU$ is a minimizer, its sub-differential at
  $h_\UU$ must contain $0$. Thus, there exists a sub-gradient $g_0
  \in \partial G_\UU(h_\UU)$ such that $2 \lambda h_\UU = -g_0$, where
  $\partial G_\UU(h_\UU)$ denotes the sub-differential of $G_\UU$ at
  $h_\UU$. Using these two equalities we can write
\begin{align*}
  2 \lambda \norm{h^* - h_\UU}{K}^2  
& = \langle h^* - h_\UU, g_0 - \nabla G_{\h P}(h^*) \rangle\\
& = \langle g_0, h^* - h_\UU \rangle -
  \langle \nabla G_{\h P}(h^*), h^* - h_\UU \rangle \\
& \leq G_\UU(h^*) - G_\UU(h_\UU) + G_{\h P}(h_\UU) - G_{\h P}(h^*) \\
& =  \cL_{\h P}(h_\UU, f_P) -\cL_{\UU_h}(h_\UU, f_Q) + \cL_{\UU_h}(h^*, f_Q) -
\cL_{\h P}(h^*, f_P) \\
& \leq 2 \max_{h \in H} |\cL_{\h P}(h, f_P) -\cL_{\UU_h}(h, f_Q)|,
\end{align*}
where we used for the first inequality the convexity of $G_\UU$
combined with the sub-gradient property of $g_0 \in \partial
G_\UU(h_\UU)$, and the convexity of $G_{\h P}$.  For any $h \in H$,
using the $\mu$-admissibility of the loss,
we can upper bound the operand of the $\max$ operator as follows:
\begin{align*}
|\cL_{\h P}(h, h'') - \cL_{\UU_h}(h, f_Q)|
& \leq |\cL_{\h P}(h, f_P) - \cL_{\h P}(h, h_0)|  + |\cL_{\h P}(h, h_0) -
   \cL_{\UU_h}(h, f_Q)| \\
& \leq \mu \E_{x \sim \h P}|f_P(x) - h_0(x)|  +
   \max_{h'' \in H''} |\cL_{\h P}(h, h'') - \cL_{\UU_h}(h, f_Q)| \\
& \leq \mu \max_{x \in \supp(\h P)} | f_P(x) -  h_0(x) | +
  \max_{h'' \in H''} |\cL_{\h P}(h, h'') - \cL_{\UU_h}(h, f_Q)|,
\end{align*}
where $h_0$ is an arbitrary element of $H''$. Since this bound holds
for all $h_0 \in H''$, it follows immediately that
\begin{equation*}
\lambda \norm{h^* - h_\UU}{K}^2 \leq \mu \min_{h_0 \in H''} \max_{x \in \supp(\h P)} | f_P(x) -  h_0(x) | + \max_{h \in H} \max_{h'' \in H''} |\cL_{\h P}(h, h'') -
\cL_{\UU_h}(h, f_Q) |, 
\end{equation*}
which concludes the proof.
\end{proof}

Our algorithm is strongly motivated by the previous bound. Indeed, for
a fixed set $H''$, our choice of $\QQ$ precisely coincides with the
choice of $\UU_h$ minimizing the right-hand side of
\eqref{eq:qhbound}, or the second term of the bound, since the first
one does not vary with $h$ or $\UU_h$. This, however, does not imply a
better performance of our algorithm over DM. Therefore, a natural
question is whether there exists a choice of $H''$ for which
\eqref{eq:qhbound} is a uniformly tighter upper bound than
\eqref{eq:disc}.  The following proposition shows that when using an
$L_p$ loss, there exists a simple family of sets for which this
property holds.  The result is expressed in terms of the
\emph{local discrepancy} defined by:
\begin{equation*}
\label{eq:localdiscrepancy}
\dis_{H''}(\h P , \qq) = \max_{h \in H, h'' \in H''} |\cL_{\h P}(h, h'') -
\cL_{\qq}(h, h'')|,
\end{equation*}
which is a finer measure than the standard discrepancy for which the
$\max$ is defined over a pair of hypothesis \emph{both} in $H
\supseteq H''$.

\begin{theorem}
 \label{th:betterbound}
Let $L$ be the $L_p$ loss for some $p \geq 1$ and $h_0^*$ the
minimizer in the definition of $\eta_H(f_P, f_Q)$:
$h_0^* = \argmin_{h_0 \in H} \big(\max_{x \in \supp(\h P)} 
  |f_P(x) - h_0(x)| + \max_{x \in \supp(\h Q)} |f_Q(x) - h_0(x)| \big)$.
Define $r \geq 0$ by $r = \max_{x \in \supp(\h Q)} |f_Q(x) -
h_0^*(x)|$. Let $\qq$ be a distribution over $\cS_\cX$ and let $H''$
be defined by $H'' = \{h'' \in H | \cL_{\qq}(h'', f_Q)
\leq r^p\}$. Then, $h_0^* \in H''$ and the following inequality
holds:
\begin{equation}
\label{eq:boundcomp}
 \mu \, d_\infty^{\h P}(f_P, H'') + \max_{h \in H, h'' \in H''}
 |\cL_{\h P}(h, h'') - \cL_{\qq}(h, f_Q) | 
\leq \mu \, \eta_H(f_P, f_Q) + \dis_{H''}(\h P, \qq).
\end{equation}
\end{theorem}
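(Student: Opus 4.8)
The plan is to prove the two assertions separately and then match the inequality term by term, using the membership $h_0^* \in H''$ as the bridge that ties the first term of the left-hand side to the $\eta_H$ term on the right. To establish $h_0^* \in H''$, I would note that $\qq$ is a distribution supported on $\cS_\cX = \supp(\h Q)$ and that $r$ bounds $|f_Q(x) - h_0^*(x)|$ on all of $\supp(\h Q)$; hence for the $L_p$ loss
\[
\cL_{\qq}(h_0^*, f_Q) = \sum_{x} q(x) |f_Q(x) - h_0^*(x)|^p \le \sum_{x} q(x)\, r^p = r^p ,
\]
so $h_0^*$ satisfies the defining constraint of $H''$.

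For the first term, since $h_0^* \in H''$ it is an admissible candidate in the minimization defining $d_\infty^{\h P}(f_P, H'')$, so $d_\infty^{\h P}(f_P, H'') \le \max_{x \in \supp(\h P)} |f_P(x) - h_0^*(x)|$. By the optimality of $h_0^*$ in the definition of $\eta_H(f_P, f_Q)$, this last quantity equals $\eta_H(f_P, f_Q) - r$. Thus $\mu\, d_\infty^{\h P}(f_P, H'') \le \mu\,\eta_H(f_P, f_Q) - \mu r$, which produces a slack of $\mu r$ that the second term must absorb.

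For the second term I would fix arbitrary $h \in H$ and $h'' \in H''$ and insert the intermediate quantity $\cL_{\qq}(h, h'')$ via the triangle inequality, so that
\[
|\cL_{\h P}(h, h'') - \cL_{\qq}(h, f_Q)| \le |\cL_{\h P}(h, h'') - \cL_{\qq}(h, h'')| + |\cL_{\qq}(h, h'') - \cL_{\qq}(h, f_Q)| .
\]
The first summand is at most $\dis_{H''}(\h P, \qq)$. The main work is bounding the second summand by $\mu r$: here I apply $\mu$-admissibility in the second argument, which is valid for the symmetric $L_p$ loss, to get $|\cL_{\qq}(h, h'') - \cL_{\qq}(h, f_Q)| \le \mu \sum_x q(x) |h''(x) - f_Q(x)|$, and then convert the $L_p$ constraint defining $H''$ into an $L_1$ bound by Jensen's inequality (concavity of $t \mapsto t^{1/p}$), namely $\sum_x q(x)|h''(x) - f_Q(x)| \le \big(\sum_x q(x)|h''(x) - f_Q(x)|^p\big)^{1/p} = \cL_{\qq}(h'', f_Q)^{1/p} \le r$. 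Taking the maximum over $h \in H$ and $h'' \in H''$ then yields a bound of $\dis_{H''}(\h P, \qq) + \mu r$ on the whole term.

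The concluding step is simply to add the two bounds: the $+\mu r$ from the second term cancels the $-\mu r$ from the first, leaving exactly $\mu\,\eta_H(f_P, f_Q) + \dis_{H''}(\h P, \qq)$. I expect the crux to be the Jensen step, since this is precisely where the $L_p$ hypothesis and the choice of the threshold $r^p$ in the definition of $H''$ conspire: the $L_p$-radius constraint translates into an $L_1$ bound of exactly $r$, which matches the $r$ subtracted from $\eta_H$ in the first term. Any other threshold would break this exact cancellation, so the proof is tight in its use of the $L_p$ structure.
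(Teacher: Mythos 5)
Your proposal is correct and follows essentially the same route as the paper: the same verification that $h_0^* \in H''$, the same triangle-inequality decomposition through $\cL_{\qq}(h, h'')$ bounding one summand by $\dis_{H''}(\h P, \qq)$ and the other by $\mu r$, and the same cancellation of $\mu r$ against the first term via the optimality of $h_0^*$ in $\eta_H(f_P, f_Q)$. The only difference is cosmetic: where the paper invokes its Lemma~\ref{lemma:holder}, you inline that lemma's proof (admissibility in the second argument via symmetry of the $L_p$ loss, then Jensen to pass from the $L_1$ average to $[\cL_{\qq}(h'', f_Q)]^{1/p} \leq r$), which is exactly how the lemma is established in the appendix.
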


\begin{proof}
The fact that $h_0^* \in H''$ follows from
\begin{equation*}
  \cL_{\qq}(h_0^*, f_Q) = \E_{x \sim \qq}\big[ |h_0^*(x) - f_Q(x)|^p \big] 
\leq \max_{x \in \supp(\h Q)} |h_0^*(x) - f_Q(x)|^p \leq r^p.
\end{equation*}
By Lemma~\ref{lemma:holder}, for all $h, h'' \in H$, $| \cL_{\qq}(h,
h'') - \cL_{\qq}(h, f_Q)| \leq \mu [\cL_{\qq}(h'',
f_Q)]^{\frac{1}{p}}$. In view of this inequality, we can write:
\begin{multline*}
 \max_{h \in H, h'' \in H''} |\cL_{\h P}(h, h'')
 - \cL_{\qq}(h,  f_Q) |\\ 
\begin{aligned}
 & \leq \max_{h \in H, h'' \in H''} |\cL_{\h P}(h, h'') 
 - \cL_{\qq}(h,  h'') |  + \max_{h \in H, h'' \in H''} | \cL_{\qq}(h, h'') -
  \cL_{\qq}(h, f_Q)| \\
& \leq \dis_{H''}(\h P, \qq)
 + \max_{h'' \in H''}\mu [\cL_{\qq}(h'', f_Q)]^{\frac{1}{p}} \\
& \leq \dis_{H''}(\h P, \qq) + \mu r  \\
& = \dis_{H''}(\h P, \qq) +
\mu \max_{x \in \supp(\h Q)} |f_Q(x) - h_0^*(x)|.
\end{aligned}
\end{multline*}
Using this inequality and the fact that $h_0^* \in H''$, we can write
\begin{align*}
& \mspace{-45mu} \mu \, d_\infty^{\h P}(f_P, H'') + \max_{h \in H, h'' \in H''} |\cL_{\h P}(h, h'') -
\cL_{\qq}(h, f_Q) | \\
& \leq  \mu \min_{h_0 \in H''} \max_{x \in \supp(\h P)} 
  |f_P(x) - h_0(x)| + \dis_{H''}(\h
P, \qq) + 
 \mu \max_{x \in \supp(\h Q)} |f_Q(x) - h_0^*(x)|  \\
& \leq  \mu \big(\max_{x \in \supp(\h P)} 
  |f_P(x) - h^*_0(x)| +  \max_{x \in \supp(\h Q)} |f_Q(x) - h^*_0(x)|
  \big) + \dis_{H''}(\h
P, \qq)\\
& = \mu \min_{h_0 \in H} \big(\max_{x \in \supp(\h P)} 
  |f_P(x) - h_0(x)| + \max_{x \in \supp(\h Q)} |f_Q(x) - h_0(x)| \big) + \dis_{H''}(\h
P, \qq) \\
& = \mu \, \eta_H(f_P, f_Q) + \dis_{H''}(\h P, \qq).
\end{align*}
which concludes the proof.
\end{proof}

The theorem shows that for that choice of $H''$, for any constant
function $\UU_h \in \cA(H)$ with $\UU_h = \qq$ for some fixed
distribution $\qq$ over $\cS_\cX$, the right-hand side of the bound of
Theorem~\ref{th:disc} is lower bounded by the right-hand side of the
bound of Theorem~\ref{th:qhbound}, since the local discrepancy is a
finer quantity than the discrepancy: $\dis_{H''}(\h P , \qq) \leq
\dis(\h P , \qq)$. Thus, our algorithm benefits from a more favorable
guarantee than the DM algorithm for the particular choice of $H''$, especially
since, our choice of $\QQ$ is based on the minimization over all
elements in $\cA(H)$ and not just the subset of constant functions
mapping to a distribution.

The following theorem gives pointwise guarantees for the
solution $h_\QQ$ returned by our algorithm.

\begin{corollary}
\label{coro:pointwise}
Let $h^*$ be a minimizer of $\lambda \norm{h}{K}^2 + \cL_{\h P}(h,
f_P)$ and $h_\QQ$ a minimizer of $\lambda \norm{h}{K}^2 +
\cL_{\QQ_h}(h, f_Q)$. Then, the following holds for any convex set
$H'' \subseteq H$:
\begin{equation}
\forall x \in \cX, y \in \cY, |L(h_\QQ(x), y) - L(h^*(x), y)|
\leq \mu R \sqrt{\frac{\mu \, d_\infty^{\h P}(f_P, H'')  + 
\DIS(\h P, \QQ)}{\lambda}},
\end{equation}
where $R^2 = \sup_{x \in \cX} K(x,x)$. If further $L$ is an $L_p$ loss
for some $p \geq 1$ and $H''$ defined as in
Theorem~\ref{th:betterbound}, then the following holds:
\begin{equation}
\forall x \in \cX, y \in \cY, |L(h_\QQ(x), y) - L(h^*(x), y)|
\leq \mu R \sqrt{\frac{\mu \, \eta_H(f_P, f_Q)  + 
\dis_{H''}(\h P, \qmin)}{\lambda}}.
\end{equation}
\end{corollary}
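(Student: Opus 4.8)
The plan is to reduce the pointwise loss guarantee to the RKHS-norm bound already established in Theorem~\ref{th:qhbound}, and then, for the second part, to substitute the sharper estimate of Theorem~\ref{th:betterbound}. First I would invoke the $\mu$-admissibility of $L$ in \eqref{eq:mu-admissible} to write, for any $x \in \cX$ and $y \in \cY$,
\begin{equation*}
|L(h_\QQ(x), y) - L(h^*(x), y)| \leq \mu\, |h_\QQ(x) - h^*(x)|.
\end{equation*}
Since $h_\QQ, h^* \in \Hset$, the reproducing property gives $h_\QQ(x) - h^*(x) = \langle h_\QQ - h^*, K(x, \cdot)\rangle$, so Cauchy--Schwarz together with $\norm{K(x,\cdot)}{K} = \sqrt{K(x,x)} \leq R$ yields $|h_\QQ(x) - h^*(x)| \leq R\,\norm{h_\QQ - h^*}{K}$. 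Combining these, the pointwise loss difference is bounded by $\mu R \, \norm{h_\QQ - h^*}{K}$, uniformly in $(x,y)$.

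For the first inequality, I would then apply Theorem~\ref{th:qhbound} to the specific element $\UU = \QQ$ of $\cA(H)$; recall that $\QQ \in \cA(H)$ by Proposition~\ref{prop:maxmin} and the remark following the definition of $\cA(H)$. This gives $\lambda \norm{h^* - h_\QQ}{K}^2 \leq \mu\, d_\infty^{\h P}(f_P, H'') + \DIS(\h P, \QQ)$, hence
\begin{equation*}
\norm{h_\QQ - h^*}{K} \leq \sqrt{\frac{\mu\, d_\infty^{\h P}(f_P, H'') + \DIS(\h P, \QQ)}{\lambda}},
\end{equation*}
and substituting into the bound $\mu R\,\norm{h_\QQ - h^*}{K}$ produces the first claimed inequality for any convex $H'' \subseteq H$.

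For the second inequality, I would specialize to the $L_p$ loss and take $H''$ as in Theorem~\ref{th:betterbound} with $\qq = \qmin$. The key step is to bound $\DIS(\h P, \QQ)$ by the corresponding fixed-distribution quantity: because $\QQ_h$ is defined in \eqref{eq:agnosqh} as the pointwise (in $h$) minimizer of $\max_{h'' \in H''} |\cL_\qq(h, f_Q) - \cL_{\h P}(h, h'')|$ over all of $\cF(\cS_\cX, \Rset)$, we have for every $h$ and every distribution $\qq$ the inequality $\max_{h'' \in H''}|\cL_{\QQ_h}(h, f_Q) - \cL_{\h P}(h, h'')| \leq \max_{h'' \in H''}|\cL_{\qq}(h, f_Q) - \cL_{\h P}(h, h'')|$; taking the maximum over $h \in H$ gives $\DIS(\h P, \QQ) \leq \max_{h \in H, h'' \in H''}|\cL_{\h P}(h, h'') - \cL_{\qmin}(h, f_Q)|$. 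Then Theorem~\ref{th:betterbound}, applied with $\qq = \qmin$, bounds $\mu\, d_\infty^{\h P}(f_P, H'') + \DIS(\h P, \QQ)$ above by $\mu\, \eta_H(f_P, f_Q) + \dis_{H''}(\h P, \qmin)$, and plugging this into the first inequality yields the second one.

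I expect the routine part to be the norm-to-pointwise reduction via Cauchy--Schwarz; the one step that requires genuine care is recognizing that the minimality of $\QQ_h$ over the full function class $\cF(\cS_\cX, \Rset)$ in particular beats every constant reweighting $\qq$, so that the generalized discrepancy $\DIS(\h P, \QQ)$ is controlled by the fixed-$\qmin$ quantity appearing on the left-hand side of Theorem~\ref{th:betterbound}. This is precisely the link that lets the favorable comparison of Theorem~\ref{th:betterbound} transfer to the data-dependent reweighting used by the algorithm.
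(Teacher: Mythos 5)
Your proposal is correct and follows essentially the same route as the paper's own proof: $\mu$-admissibility plus the reproducing property and Cauchy--Schwarz reduce the pointwise loss difference to $\mu R \norm{h_\QQ - h^*}{K}$, Theorem~\ref{th:qhbound} with $\UU = \QQ$ gives the first bound, and the minimality of $\QQ_h$ over $\cF(\cS_\cX, \Rset)$ combined with Theorem~\ref{th:betterbound} (with $\qq = \qmin$) gives the second. The only difference is that you spell out explicitly the step the paper leaves implicit---that $\DIS(\h P, \QQ) \leq \max_{h \in H, h'' \in H''} |\cL_{\h P}(h, h'') - \cL_{\qmin}(h, f_Q)|$ because $\QQ_h$ beats every constant reweighting pointwise in $h$---which is a faithful elaboration, not a departure.
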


\begin{proof}
  By the $\mu$-admissibility of the loss, the reproducing property of
  $\Hset$, and the Cauchy-Schwarz inequality, the following holds for
  all $x \in \cX$ and $y \in \cY$:
\begin{multline*}
|L(h_\QQ(x), y) - L(h^*(x), y)| 
\leq \mu |h'(x) - h^*(x)| 
= |\langle h' - h^*, K(x, \cdot) \rangle_K| \\
\leq \|h' - h^*\|_K \sqrt{K(x, x)} \leq R \|h' - h^*\|_K.
\end{multline*}
Upper bounding $\|h' - h^*\|_K$ using the bound of
Theorem~\ref{th:qhbound} and using the fact that $\QQ$ is
a minimizer of the bound over all choices of $\UU \in \cA(H)$ yields
the desired result.
\end{proof}
The pointwise loss guarantees just presented can be directly 
used to bound the difference of the expected loss of $h^*$ and
$h_\QQ$ in terms of the same upper bounds, e.g.,
\begin{equation}
\label{eq:genbound}
\cL_P(h_\QQ, f_P) \leq \cL_P(h^*, f_P)| +
\mu R \sqrt{\frac{\mu \, d_\infty^{\h P}(f_P, H'')  + 
\DIS(\h P, \QQ)}{\lambda}}.
\end{equation}
The results presented in this section suggest selecting $H''$ to
minimize the right-hand side of \eqref{eq:genbound}. The space over
which $H''$ is searched is the family of all balls centered in $f_Q$
defined in terms of $\cL_{\qmin}$, which is parametrized only by the
radius $r$. This is motivated by Theorem~\ref{th:betterbound} which
shows that this family contains choices for $H''$ with provably more
favorable guarantees than that of the DM algorithm. Given a small
amount of labeled data from the target domain (which is often the case
in practice), it can be used as a validation set to select the value
of r minimizing the bound of Corollary~\ref{coro:pointwise}.

\subsection{Scenario of additional labeled data}

Here, we consider a rather common scenario in practice where, in
addition to the labeled sample $\cS$ drawn from the source domain and the
unlabeled sample $\cT$ from the target domain, the learner receives a
small amount of labeled data from the target domain $\cT' = ((x''_1,
y''_1), \ldots, (x''_{s}, y''_{s})) \in (\cX \times \cY)^{s}$. This
sample is typically too small to be used solely to train an
algorithm and achieve a good performance. However, it can be
useful in at least two ways that we discuss here.

One important benefit of $\cT'$ is to serve as a validation set to
determine the parameter $r$ that defines the convex set $H''$ used by
our algorithm.  Another use of $\cT'$ is to augment
our algorithm to exploit the additional source of information it
provides.  Our learning guarantees can be extended to cover this
case. Let $\h P'$ denote the empirical distribution associated to
$\cT'$. To take advantage of $\cT'$, our algorithm can be trained on
the sample of size $(m + s)$ obtained by combining $\cS$ and $\cT'$,
which corresponds to the new empirical distribution $\h Q' =
\frac{m}{m + s} \h Q + \frac{s}{m + s} \h P'$. Note that for large
values of $s$, $\h Q'$ essentially ignores the points from the source
distribution $Q$, which corresponds to the standard supervised
learning scenario in the absence of adaptation. Let $\qpmin$ denote
the discrepancy minimization solution when using $\h Q'$. Since
$\supp(\h Q') \supseteq \supp(\h Q)$, the local discrepancy using
$\qpmin$ is a lower bound on the local discrepancy using $\qmin$:
\begin{equation*}
 \dis_{H''}(\qpmin, \h P) 
= \min_{\supp(\qq) \subseteq \supp(\h Q')} \dis_{H''}(\h P, \qq) 
\leq \min_{\supp(\qq) \subseteq \supp(\h Q)} \dis_{H''}(\h P, \qq) = \dis_{H''}(\qmin, \h P).
\end{equation*}
Thus, in view of Corollary~\ref{coro:pointwise}, for an appropriate
choice of $H''$, the learning guarantee for our algorithm is more
favorable when using $\h Q'$, which suggests that, using the limited
amount of labeled points from the target distribution can improve the
performance of our algorithm.

\section{Optimization solution}
\label{sec:optimization}

As shown in Section~\ref{sec:formulation-optimization}, the function
$G$ defining our algorithm is convex and the problem of minimizing the
expression \eqref{eq:optmaxmin} is a convex optimization problem.
Nevertheless, the problem is not straightforward to solve, in
particular because evaluating the term $\max_{h'' \in H''} \cL_{\h
  P}(h, h'')$ that it contains requires solving a non-convex
optimization problem. We present two solutions for the problem in the
case of the $L_2$ loss: an exact solution obtained by solving a
semi-definite programming (SDP) problem, which we prove is equivalent
to the original optimization problem in the case of a broad family of
convex sets $H''$; and an approximate solution for an arbitrary convex
set $H''$ based on sampling and solving a quadratic programming (QP)
problem.

\subsection{SDP formulation}
\label{sec:sdp}

As discussed in Section~\ref{sec:guarantees}, the choice of $H''$ is a
key component of our algorithm. In view of
Corollary~\ref{coro:pointwise}, we will consider the set $ H' = \set{
h'' \,|\, \cL_{\qmin}(h'', f_Q) \leq r^2 }$, for a fixed value of
$r$. Define $W$ by $W = \Span(K(x_1, \cdot), \ldots, K(x_m, \cdot))$
and denote by $W^\bot$ its orthogonal complement. By the reproducing
property of $\Hset$, for every $h^\bot \in W^\bot$ we have
$h^\bot(x_i) = \langle h^\bot, K(x_i, \cdot) \rangle_K = 0$. Thus, the
equality $\cL_{\qmin}(h'', f_Q) = \cL_{\qmin}(h'' + h^\bot, f_Q)$
holds for for any function $h''$. We will therefore consider only
hypotheses in the subspace $W$ and define $H''$ to be equal to the set
$\{ \a \in \Rset^m | \sum_{j=1}^m \qmin(x_j)(\sum_{i=1}^m a_i
\qmin(x_i)^{1/2}K(x_i, x_j) - y_j)^2 \leq r^2\}$. Similarly, by the
representer theorem, we know the solution to \eqref{eq:optmaxmin} will
be of the form $h = n^{-1/2}\sum_{i=1}^nb_i K(x_i', \cdot)$.

We define the \emph{normalized} kernel matrices $\Kt$, $\Ks$, and
$\Kst$ respectively by $\Kt^{ij} = n^{-1} K(x_i', x_j')$, $\Ks^{ij} =
\qmin(x_i)^{1/2} \qmin(x_j)^{1/2} K(x_i, x_j)$ and $\Kst^{ij} =
n^{-1/2} \qmin(x_j)^{1/2} K(x_i', x_j)$. For our choice of the convex set
$H''$, problem \eqref{eq:optmaxmin} is then equivalent to
\begin{equation}
\label{eq:kmaxmin}
\min_{\b \in \Rset^n} \lambda \b^\top \Kt \b +
\frac{1}{2} \left(\max_{\substack{\a \in \Rset^m \\ \|\Ks \a - \y\|^2 \leq r^2}} \|\Kst \a - \Kt \b\|^2 
+ \min_{\substack{\a \in \Rset^m \\ \|\Ks \a - \y\|^2 \leq r^2}} \|\Kst \a - \Kt \b\|^2 \right),
\end{equation}
where $\y = (\qmin(x_1)^{1/2}y_1, \ldots, \qmin(x_m)^{1/2} y_m)$ is
the vector of normalized labels.

\begin{lemma}
\label{lemma:sdpdual} 
The Lagrangian dual of the problem
$\max_{\substack{\a \in \Rset^m \\ \|\Ks \a - \y\|^2 \leq r^2}} 
 \ \frac{1}{2}\|\Kst \a\|^2 - \b^\top \Kt \Kst \b$
is given by
\vspace{-.75cm}
\begin{align*}
\min_{\eta \geq 0, \gamma} & \ \gamma \\  
\text{s. t.} & \ \left( 
\def\arraystretch{1.3}
\begin{array}{cc} 
 -\frac{1}{2} \Kst^\top \Kst + \eta \Ks^2  
& \frac{1}{2}\Kst^\top \Kt \b - \eta \Ks\y  \\
\frac{1}{2} \b^\top  \Kt \Kst - \eta \y^\top \Ks  
&  \eta (\|\y\|^2 - r^2) + \gamma 
\end{array}
\right) \succeq  0. 
\end{align*}
Furthermore, the duality gap for these problems is zero.
\end{lemma}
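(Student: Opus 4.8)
**The plan is to derive the Lagrangian dual via the standard recipe for a quadratically-constrained quadratic program (QCQP), then invoke strong duality.**

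The starting point is the primal maximization problem, which is a nonconvex quadratic maximization (the objective $\frac{1}{2}\|\Kst\a\|^2$ is convex in $\a$, so maximizing it is nonconvex) subject to a single convex quadratic constraint $\|\Ks\a - \y\|^2 \le r^2$. I would first rewrite this as a minimization of the negated objective and homogenize using the Schur-complement / epigraph trick: introduce a scalar variable $\gamma$ as an upper bound on the objective value, so the problem becomes minimizing $\gamma$ subject to $\frac{1}{2}\|\Kst\a\|^2 - \b^\top \Kt \Kst \b \le \gamma$ for all feasible $\a$. Next I would form the Lagrangian by attaching a multiplier $\eta \ge 0$ to the constraint $\|\Ks\a-\y\|^2 - r^2 \le 0$, and collect all terms quadratic, linear, and constant in $\a$. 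After expanding $\|\Ks\a-\y\|^2 = \a^\top \Ks^2 \a - 2\y^\top\Ks\a + \|\y\|^2$ and $\|\Kst\a\|^2 = \a^\top \Kst^\top\Kst\a$, the Lagrangian is a quadratic form in $\a$.

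The key step is recognizing that requiring the Lagrangian (after moving $\gamma$ to the other side) to be bounded below over all $\a \in \Rset^m$ is exactly the condition that the associated $(m+1)\times(m+1)$ symmetric matrix, in the homogeneous coordinates $(\a, 1)$, be positive semidefinite. Concretely, the constraint ``$-\frac12\a^\top\Kst^\top\Kst\a + \b^\top\Kt\Kst\b + \eta(\a^\top\Ks^2\a - 2\y^\top\Ks\a + \|\y\|^2 - r^2) + \gamma \ge 0$ for all $\a$'' is homogeneous of degree two in $(\a,1)$ and therefore equivalent to the block matrix
\begin{equation*}
\begin{pmatrix}
-\frac12\Kst^\top\Kst + \eta\Ks^2 & \frac12\Kst^\top\Kt\b - \eta\Ks\y \\
\frac12\b^\top\Kt\Kst - \eta\y^\top\Ks & \eta(\|\y\|^2 - r^2) + \gamma
\end{pmatrix} \succeq 0,
\end{equation*}
which matches the stated dual exactly once the $-\b^\top\Kt\Kst\b$ contribution is absorbed appropriately. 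Reading off the dual as minimizing $\gamma$ over $\eta \ge 0, \gamma$ subject to this LSMI then gives the claimed form.

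The main obstacle, and the part requiring genuine care rather than bookkeeping, is the zero-duality-gap claim, since the primal is a nonconvex quadratic program. Weak duality is automatic, but strong duality for a QCQP with a \emph{single} quadratic constraint is precisely the regime covered by the S-procedure (equivalently, the Lossless S-Lemma), which guarantees zero gap provided a Slater-type strict feasibility condition holds, namely that there exists $\a$ with $\|\Ks\a - \y\|^2 < r^2$. I would therefore verify this interior-point condition (which holds whenever the constraint region has nonempty interior, as it does for $r > 0$ when $\Ks$ has suitable rank) and then cite the S-Lemma to conclude that the primal optimum equals the dual optimum. This is the crux: the single-constraint structure is exactly what makes the SDP relaxation tight, so I would emphasize that the argument does not extend to multiple constraints.
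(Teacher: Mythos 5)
Your derivation follows essentially the same route as the paper's proof: attach a multiplier $\eta \geq 0$ to the single quadratic constraint, introduce the epigraph variable $\gamma$, and express boundedness of the Lagrangian over $\a$ as a positive semidefiniteness condition. The paper carries this out by writing the dual objective via the conjugate of a quadratic (with a pseudo-inverse term) and then applying the Schur complement; your direct homogenization in the coordinates $(\a, 1)$ is the same computation packaged differently. For the zero-gap claim, the paper cites strong duality for a QCQP with a single quadratic constraint \citep{BoydVandenberghe2004}, which is exactly the S-procedure result you invoke; your explicit attention to the Slater point $\|\Ks \a - \y\|^2 < r^2$ is, if anything, more careful than the paper, which does not verify strict feasibility.

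One step as you wrote it does not go through, however. You treat $\b^\top \Kt \Kst \b$ as a constant (as the lemma statement literally reads) and say it is ``absorbed appropriately,'' yet your homogenized matrix contains the off-diagonal block $\frac{1}{2}\Kst^\top \Kt \b - \eta \Ks \y$. A constant term cannot generate a linear-in-$\a$ (off-diagonal) block: under your reading, the constant would land in the bottom-right corner, giving $\eta(\|\y\|^2 - r^2) + \gamma + \b^\top \Kt \Kst \b$ there and leaving only $-\eta\Ks\y$ off the diagonal, which is a different dual from the one stated. The resolution is that the lemma statement contains a typo: the primal objective is $\frac{1}{2}\|\Kst \a\|^2 - \b^\top \Kt \Kst \a$, linear in $\a$, as the paper's own proof makes explicit in its Lagrangian and as is forced by the context --- this subproblem arises from expanding $\|\Kst \a - \Kt \b\|^2$ in \eqref{eq:kmaxmin}, whose cross term is linear in $\a$. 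With the linear term in place, the coefficient $\b^\top \Kt \Kst - 2\eta \y^\top \Ks$ halves into exactly the off-diagonal block you wrote, and the remainder of your argument is correct as stated.
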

The proof of the lemma is given in Appendix~\ref{app:sdpdual}. The
lemma helps us derive the following equivalent SDP formulation for our
original optimization problem. Its solution can be found in
polynomial time using standard convex optimization solvers.

\begin{proposition}
\label{prop:cone}
The optimization problem \eqref{eq:kmaxmin} is equivalent to the
following SDP:
\begin{align*}
\max_{\alpha, \beta, \nu, \Z, \z} & \ \frac{1}{2} \Tr(\Kst^\top \Kst \Z)
- \beta - \alpha\\
\text{s. t} & \ \left(
\def\arraystretch{1.3}
\begin{array}{cc}
\nu \Ks^2 + \frac{1}{2}\Kst^\top \Kst - \frac{1}{4} \wt{\mat K}
& \nu  \Ks \y + \frac{1}{4}\wt{\mat K} \z \\
\nu  \y^\top \Ks + \frac{1}{4} \z^\top \wt{\mat K}
&\alpha + \nu (\|\y\|^2 - r^2)
\end{array}
\right) \succeq 0 \quad \wedge \quad
\left(
\begin{array}{cc}
\Z & \z \\
\z^\top & 1
\end{array}
\right) \succeq 0 \\
& \ \left(
\def\arraystretch{1.3}
\begin{array}{cc}
\lambda \Kt + \Kt^2 & \frac{1}{2} \Kt \Kst \z \\
\frac{1}{2} \z^\top \Kst^\top \Kt & \beta
\end{array}
\right) \succeq 0
\quad \wedge \quad \Tr(\Ks^2 \Z) - 2\y^\top \Ks \z + \|\y\|^2 \leq r^2
\quad \wedge \quad \nu \geq 0,
\end{align*}
where $\wt{\mat K} = \Kst^\top \Kt (\lambda \Kt + \Kt^2)^\dag \Kt \Kst$.
\end{proposition}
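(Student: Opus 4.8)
The plan is to convert \eqref{eq:kmaxmin} into the stated SDP through a sequence of value-preserving reformulations, handling the outer minimization over $\b$, the inner maximization over $\a$, and the inner minimization over $\a$ in turn, and then dualizing the resulting convex program. The single genuinely non-convex ingredient is the inner maximization $\max_{\|\Ks\a - \y\|^2 \le r^2} \|\Kst\a - \Kt\b\|^2$, which, after expanding the square, is the maximization of a convex quadratic subject to a single quadratic constraint. This is exactly the problem whose Lagrangian dual is computed in Lemma~\ref{lemma:sdpdual}, and the decisive fact I would use is its zero-duality-gap conclusion: it lets me replace the inner maximization, for each fixed $\b$, by the equivalent minimization $\min_{\eta \ge 0, \gamma} \gamma$ subject to a linear matrix inequality that is affine in $(\b, \eta, \gamma)$.

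Once the maximization has been turned into a minimization, the entire expression \eqref{eq:kmaxmin} becomes a single joint minimization over $\b$, the dual variables $(\eta, \gamma)$ of the maximization, and the variable $\a$ of the inner minimization, subject to the affine LMI from the lemma together with the convex constraint $\|\Ks\a - \y\|^2 \le r^2$. Each summand is now convex and each constraint is SDP-representable, so this is a bona fide convex program with the same optimal value as the original problem \eqref{eq:kmaxmin}. I would then eliminate $\b$ analytically: it enters only through an unconstrained convex quadratic with quadratic-form matrix $\lambda\Kt + \Kt^2$, the sum of the regularizer's term $\lambda\Kt$ and the two copies of $\tfrac12\Kt^2$ arising from the squared norms in the two inner terms. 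Completing the square, or equivalently a Schur-complement argument, produces the matrix $\wt{\mat K} = \Kst^\top\Kt(\lambda\Kt + \Kt^2)^\dag\Kt\Kst$ and the block constraint with top-left $\lambda\Kt + \Kt^2$ and bottom-right $\beta$ appearing in the statement.

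It remains to dualize the resulting program to pass from a minimization to the maximization in the statement. The PSD dual variable attached to the lemma's LMI, normalized to have bottom-right entry $1$ and written in block form with blocks $\Z$ and $\z$, supplies exactly the lifted second-moment and first-moment variables of the quadratically constrained subproblem; under this identification $\tfrac12\|\Kst\a\|^2$ lifts to $\tfrac12\Tr(\Kst^\top\Kst\Z)$ and $\|\Ks\a - \y\|^2$ lifts to $\Tr(\Ks^2\Z) - 2\y^\top\Ks\z + \|\y\|^2$, reproducing the objective and the constraint $\Tr(\Ks^2\Z) - 2\y^\top\Ks\z + \|\y\|^2 \le r^2$. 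The nonnegative multiplier $\nu$ descends from $\eta$ and carries the quadratic-constraint data into the first LMI, while $\alpha$ and $\beta$ collect the remaining epigraph and Schur-complement multipliers. Matching the three blocks term by term yields the displayed SDP, and invoking Slater's condition (a strictly feasible $\a$ exists whenever $r$ is large enough that the set defining $H''$ has nonempty interior) certifies strong duality, hence equality of the two optimal values.

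The step I expect to be the main obstacle is guaranteeing exactness at every reformulation simultaneously. The non-convex maximization is saved by the zero-gap statement of Lemma~\ref{lemma:sdpdual}, but I still have to justify that the lift realized by $(\Z, \z)$ is tight — that the optimal PSD dual variable is rank one, so that $\Z = \z\z^\top$ — which is where the single-quadratic-constraint structure (the S-procedure, or trust-region tightness) is essential. Equally delicate is the bookkeeping: the quadratic forms change sign across the two successive dualizations, $\b$ is coupled to both inner terms and to the LMI, and confirming that the three interlocking blocks assemble into exactly the matrices written, with the right factors of $\tfrac14$ and $\tfrac12$, is the most error-prone part of the verification.
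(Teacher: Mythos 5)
Your overall architecture is the same as the paper's: use the zero-duality-gap conclusion of Lemma~\ref{lemma:sdpdual} to replace the inner non-convex maximization by an LMI-constrained minimization in $(\eta, \gamma)$, then dualize the resulting joint convex program with a PSD block multiplier whose blocks are $\Z$, $\z$ and a scalar corner (forced to equal $1$ by stationarity in $\gamma$), and finish with Schur complements. But your middle step fails as stated. After the first dualization, $\b$ does \emph{not} enter only through an unconstrained convex quadratic: it appears affinely in the off-diagonal block $\frac{1}{2}\Kst^\top \Kt \b - \eta \Ks \y$ of the LMI supplied by the lemma, so the partial minimization over $\b$ is a constrained problem and cannot be carried out by completing the square in the objective alone; doing so would alter the feasible set and destroy the equivalence with \eqref{eq:kmaxmin}. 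What the paper actually does is substitute $\b = \frac{1}{2}(\lambda \Kt + \Kt^2)^\dag \Kt \Kst \a + \v$, which only cancels the cross term with the min-player's variable $\a$ and makes $\wt{\mat K}$ appear in the objective and in the LMI; the new variable $\v$ survives in both and is eliminated only \emph{inside the Lagrangian}, where stationarity yields $\v = \frac{1}{2}(\lambda \Kt + \Kt^2)^\dag \Kt \Kst \z$. It is this dual-side minimization, not a primal elimination of $\b$, that produces the third block constraint, which via the Schur complement just encodes $\beta \geq \frac{1}{4}\z^\top \wt{\mat K} \z$; the fact that this block involves the dual variable $\z$ rather than $\b$ is already a sign that your ordering (eliminate $\b$ first, dualize second) cannot assemble the stated matrices.

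Your anticipated ``main obstacle'' is also a misdiagnosis, and by your own admission it leaves your proof unfinished: no rank-one tightness $\Z = \z\z^\top$ is needed anywhere. All of the non-convexity is spent once, in Lemma~\ref{lemma:sdpdual} (a single-constraint QCQP, where the S-procedure argument lives); everything downstream is a convex program, and the proposition asserts equality of optimal values, which follows from ordinary Lagrangian/SDP duality for that convex program together with a constraint qualification---the Slater-type condition you mention, which the paper in fact glosses over, so raising it is to your credit. Your reading of $(\Z, \z)$ as the lifted second- and first-moment variables of the max-player's $\a$ is a correct bidual interpretation and does explain the objective term $\frac{1}{2}\Tr(\Kst^\top \Kst \Z)$ and the constraint $\Tr(\Ks^2 \Z) - 2\y^\top \Ks \z + \|\y\|^2 \leq r^2$, but exactness of that lift is simply not required for value equivalence. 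To repair the proof, keep the lemma step, perform the paper's change of variables in $\b$, form the Lagrangian of the full constrained program in $(\a, \v, \eta, \gamma)$ with multipliers $\nu \geq 0$ for the ball constraint and the PSD block for the LMI, and extract $\alpha$ and $\beta$ as the Schur-complement surrogates for the two quadratic terms in the dual objective.
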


\subsection{QP formulation}
\label{sec:optimizationqp}

The SDP formulation described in the previous section is applicable
for a specific choice of $H''$. In this section, we present an
analysis that holds for an arbitrary convex set $H''$. First, notice
that the problem of minimizing $G$ (expression \eqref{eq:optmaxmin}) is
related to the minimum enclosing ball (MEB) problem.  For a set $D
\subseteq \Rset^d$, the MEB problem is defined as follows:
\begin{equation*}
  \min_{\bu \in \Rset^d}\max_{\bv \in D} \|\bu - \bv\|^2.
\end{equation*}
Omitting the regularization and the $\min$ term from
\eqref{eq:optmaxmin} leads to a problem similar to the MEB.  Thus, we
could benefit from the extensive literature and algorithmic study
available for this problem
\citep{welz1991,Kumar03,schonherr,fischer2003fast,Yildirim2008}.
However, to the best of our knowledge, there is currently no
solution available to this problem in the case of an infinite
set $D$, as in the case of our problem.
Instead, we present a solution for solving an approximation of
\eqref{eq:optmaxmin} based on sampling.

Let $\{h_1, \ldots, h_k\}$ be a set of hypotheses in $\partial H''$
and let $\mathcal{C} = \mathcal{C}(h_1, \ldots, h_k)$ denote their
convex hull. The following is the sampling-based approximation of
\eqref{eq:optmaxmin} that we consider:
\begin{equation}
\label{eq:optmaxapp}
\min_{h \in \Hset} \lambda \norm{h}{K}^2 +
 \frac{1}{2} \max_{i=1,...,k} \cL_{\h P}(h, h_i) + \frac{1}{2}
 \min_{h' \in  \mathcal{C}} \cL_{\h P }(h, h').
\end{equation}

\begin{proposition} 
\label{prop:dual}
Let $\mat Y =(Y_{ij}) \in \Rset^{n \times k}$ be the
matrix defined by $Y_{ij} = n^{-1/2} h_j(x_i')$ and $\y' = (y'_1,
\ldots, y'_k)^\top \in \Rset^k $ the vector defined by $y'_i = n^{-1}
\sum_{j=1}^n h_i(x'_j)^2$. Then, the dual problem of
\eqref{eq:optmaxapp} is given by
\begin{align}
\label{eq:dualapp}
\max_{\bm \alpha, \bm \gamma, \beta} & \ -\Big(\mat Y \bm \alpha + \frac{\bm
  \gamma}{2} \Big)^\top \Kt\Big(\lambda \I + \frac{1}{2}\Kt\Big)^{-1}
\Big(\mat Y \bm \alpha  + \frac{\bm \gamma}{2}\Big) - \frac{1}{2} \bm
\gamma^\top \Kt \Kt^\dag \bm \gamma +  \bm \alpha^\top \y' - \beta\\
\text{s.t.} & \ \1^\top \bm \alpha = \frac{1}{2}, \qquad  \1
\beta \geq -\mat Y^\top \bm \gamma, \qquad \bm \alpha\geq
0, \nonumber
\end{align}
where $\1$ is the vector in $\Rset^k$ with all components equal to
$1$. Furthermore, the solution $h$ of \eqref{eq:optmaxapp} can be
recovered from a solution $(\bm \alpha, \bm \gamma, \beta)$ of
\eqref{eq:dualapp} by $\forall x, h(x) =\sum_{i = 1}^n a_i K(x_i, x)$,
where $\bm a = \big(\lambda \I + \frac{1}{2}\Kt)^{-1}(\mat Y \bm
\alpha + \frac{1}{2}\bm \gamma)$.
\end{proposition}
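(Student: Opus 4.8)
The plan is to reduce \eqref{eq:optmaxapp} to a finite-dimensional convex quadratically constrained program and then obtain \eqref{eq:dualapp} by Lagrangian duality. Since $L$ is the $L_2$ loss, every term is quadratic in the evaluations of $h$ on the target points; by the representer theorem the minimizer lies in $W = \Span(K(x_1',\cdot),\dots,K(x_n',\cdot))$, so I parametrize $h$ by $\b\in\Rset^n$, recording $\norm{h}{K}^2 = \b^\top\Kt\b$ and that the normalized evaluation vector equals $\Kt\b$. With $\mat Y$ and $\y'$ as defined, this turns $\cL_{\h P}(h,h_i)$ into $\|\Kt\b - \mat Y\mathbf e_i\|^2$ (so $\|\mat Y\mathbf e_i\|^2 = y_i'$) and, writing any $h'\in\mathcal{C}$ as $\sum_i\theta_i h_i$ with $\bm\theta$ in the simplex $\Delta_k$, turns $\min_{h'\in\mathcal{C}}\cL_{\h P}(h,h')$ into $\min_{\bm\theta\in\Delta_k}\|\Kt\b - \mat Y\bm\theta\|^2$.

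First I would linearize the finite maximum with an epigraph variable $t$ and the $k$ constraints $\|\Kt\b-\mat Y\mathbf e_i\|^2\le t$, and introduce an auxiliary variable $\mathbf p$ for the convex-hull point together with the affine link $\mathbf p=\mat Y\bm\theta$. This yields the jointly convex program $\min_{\b,t,\bm\theta\in\Delta_k,\mathbf p}\ \lambda\b^\top\Kt\b+\tfrac12 t+\tfrac12\|\Kt\b-\mathbf p\|^2$ subject to the epigraph constraints and the link (the coupling term being a convex quadratic in $(\b,\mathbf p)$). Then I would form the Lagrangian, attaching multipliers $\bm\alpha\ge 0$ to the epigraph constraints and a free multiplier $\bm\gamma\in\Rset^n$ to $\mat Y\bm\theta=\mathbf p$, and minimize over the primal variables in turn. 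Minimizing over $t$ forces $\1^\top\bm\alpha=\tfrac12$; minimizing the resulting linear form $(\mat Y^\top\bm\gamma)^\top\bm\theta$ over $\Delta_k$ gives $\min_i(\mat Y^\top\bm\gamma)_i$, which I rewrite through an epigraph scalar $\beta$ as the term $-\beta$ together with the constraint $\1\beta\ge-\mat Y^\top\bm\gamma$.

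The two remaining inner minimizations produce the quadratic terms of \eqref{eq:dualapp}. Minimizing over $\mathbf p$, which ranges over $\operatorname{range}(\Kt)$ since each $h_i$ has an evaluation vector in that range, yields $-\tfrac12\bm\gamma^\top\Kt\Kt^\dag\bm\gamma-\bm\gamma^\top\Kt\b$, the projection $\Kt\Kt^\dag$ appearing precisely because $\mathbf p$ is confined to $\operatorname{range}(\Kt)$. Collecting the $\b$-terms gives the convex quadratic $\b^\top(\lambda\Kt+\tfrac12\Kt^2)\b-\b^\top\Kt(2\mat Y\bm\alpha+\bm\gamma)$; setting its gradient to zero gives the stationarity condition that, after the rescaling relating $\b$ to $\bm a$, is exactly the recovery $\bm a=(\lambda\I+\tfrac12\Kt)^{-1}(\mat Y\bm\alpha+\tfrac12\bm\gamma)$, and substituting the optimal $\b$ back produces $-(\mat Y\bm\alpha+\tfrac{\bm\gamma}{2})^\top\Kt(\lambda\I+\tfrac12\Kt)^{-1}(\mat Y\bm\alpha+\tfrac{\bm\gamma}{2})$ after using that $\Kt$ and $(\lambda\I+\tfrac12\Kt)^{-1}$ commute. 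Together with $\bm\alpha^\top\y'$ from the constants, this assembles the dual objective and its constraints.

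Finally, strong duality (hence exact recovery of the primal solution) follows from Slater's condition: the only nonlinear constraints are strictly feasible epigraph inequalities (take $t$ large) while the simplex and link constraints are affine, so the duality gap is zero. I expect the main obstacle to be the bookkeeping of the min-max-min structure, in particular isolating the convex-hull minimization so that its dualization produces exactly the scalar $\beta$ and the vector $\bm\gamma$ of the correct dimension, and tracking the projection $\Kt\Kt^\dag$ and the normalization constants that connect $\b$ to the recovery variable $\bm a$.
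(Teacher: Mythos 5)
Your derivation is correct and arrives at exactly the dual \eqref{eq:dualapp}, but it takes a somewhat different route from the paper's. The paper first proves a lemma for explicitly linear hypotheses $\w \in \Rset^d$ with data matrix $\X'$ --- dualizing an epigraph variable for the max (made \emph{affine} by pulling the common quadratic $\frac{1}{2}\|\X'^\top\w\|^2$ out of it), simplex variables $\bm\mu$ for the hull with multipliers $\beta$ and $\bm\delta \geq 0$, and a difference variable $\bu = \w' - \w$ whose link constraint carries the multiplier $\bm\gamma'$ --- and only afterwards kernelizes via the push-through identities relating $\X'(\lambda\I + \X'^\top\X')^{-1}$ to $(\lambda\I + \X'\X'^\top)^{-1}\X'$ and $\X'^\top\X'(\X'^\top\X')^\dag$ to $\X'^\top(\X'\X'^\top)^\dag\X'$. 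You instead invoke the representer theorem and work directly in coefficient space $\b \in \Rset^n$, so $\Kt$ appears from the start and no kernelization step is needed; correspondingly, where the paper obtains the term $\frac{1}{2}\bm\gamma^\top\Kt\Kt^\dag\bm\gamma$ from the stationarity condition $\X'\X'^\top\bu = \bm\gamma'$, which forces feasible multipliers to have the form $\bm\gamma' = \X'\bm\gamma$, you obtain the same projection by confining the hull evaluation vector $\mathbf{p}$ to $\operatorname{range}(\Kt)$ --- legitimate precisely because, as you note, evaluation vectors of RKHS functions on the target sample lie in the range of the Gram matrix, so the restriction does not change the primal. Two further differences are cosmetic: your $\beta$ enters as an epigraph scalar for $\min_i(\mat Y^\top\bm\gamma)_i$ rather than as the multiplier of $\1^\top\bm\mu = 1$ (with $\bm\delta \geq 0$ then yielding $\1\beta \geq -\mat Y^\top\bm\gamma$), and you keep the epigraph constraints quadratic, $\|\Kt\b - \mat Y\mathbf{e}_i\|^2 \leq t$; once $\1^\top\bm\alpha = \frac{1}{2}$ is enforced, these produce the identical quadratic $\b^\top(\lambda\Kt + \frac{1}{2}\Kt^2)\b$ and hence the same dual. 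Your route buys a self-contained kernel-space argument and an explicit zero-duality-gap justification via Slater's condition, which the paper leaves implicit; the paper's route buys a transparent feature-space QP with only affine constraints in the hull variables, at the cost of the final matrix-identity bookkeeping (and note your phrase ``after the rescaling relating $\b$ to $\bm a$'' is apt: the $n^{-1/2}$ normalization in the parametrization $h = n^{-1/2}\sum_i b_i K(x_i',\cdot)$ is glossed over in the paper's own recovery statement).
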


The proof of the proposition is given in Appendix~\ref{app:qpformula}.
The result shows that, given a finite sample $h_1, \ldots, h_k$ on the
boundary of $H''$, \eqref{eq:optmaxapp} is in fact equivalent to a
standard QP. Hence, a solution can be found efficiently with one of
the many off-the-shelf algorithms for quadratic programming.

We now describe the process of sampling from the boundary of the set
$H''$, which is a necessary step for defining problem
\eqref{eq:optmaxapp}. We consider compact sets of the form $H'':=
\{h'' \in \Hset \; | \; g_i(h'') \leq 0\}$, where the functions $g_i$
are continuous and convex. For instance, we could consider the set
$H''$ defined in the previous section. More generally, we can consider
a family of sets $H''_p = \{h'' \in H | \; | \; \sum_{i=1}^m
\qmin(x_i)|h(x_i) -y_i|^p \leq r^p\}$.
 
Assume that there exists $h_0$ satisfying $g_i(h_0) < 0$. Our sampling
process is illustrated by Figure~\ref{fig:hsampling} and works as follows: pick a
random direction $\h h$ and define $\lambda_i$ to be the minimal
solution to the system
\begin{equation*}
  (\lambda \geq 0) \wedge (g_i(h_0 + \lambda \h{h}) = 0).
\end{equation*}
Set $\lambda_i = \infty$ if no solution is found and define $\lambda^*
= \min_i \lambda_i$. Notice that the compactness of $H''$ guarantees
the condition $\lambda^* < \infty$. The hypothesis $h = h_0 +
\lambda^* \h h$ satisfies $h \in H''$ and $g_j(h) = 0$ for $j$ such
that $\lambda_j = \lambda^*$. The latter is straightforward. To verify
the former, assume that $g_i(h_0 + \lambda^* \h h) > 0$ for some
$i$. The continuity of $g_i$ would imply the existence of $\lambda_i'$
with $0 < \lambda'_i < \lambda^* \leq \lambda _i$ such that $g_i(h_0 +
\lambda_i' \h h) = 0$. This would contradict the choice of
$\lambda_i$, thus, the inequality $g_i(h_0 + \lambda^* \h h) \leq0$
must hold for all $i$.

Since a point $h_0$ with $g_i(h_0) < 0$ can be obtained by solving a
convex program and solving the equations defining $\lambda_i$ is, in
general, simple, the process described provides an efficient way of
sampling points from the convex set $H''$.

\begin{figure}[t]
\centering
\includegraphics[scale=.4]{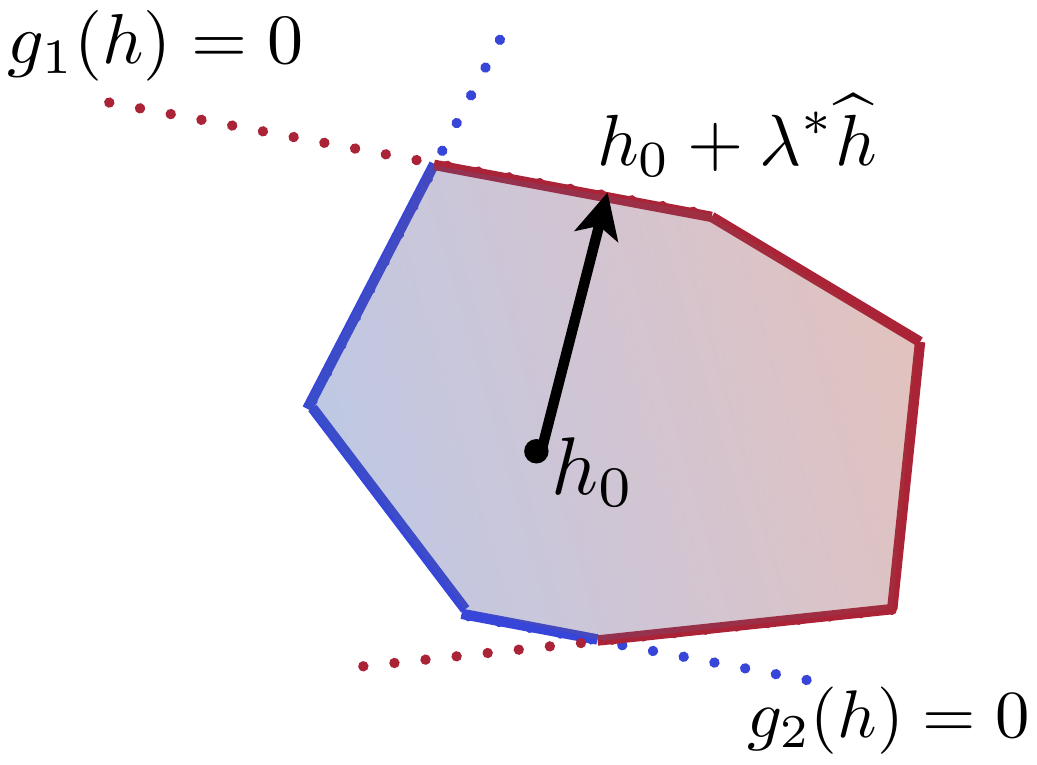}
\vskip -.4cm
\caption{Illustration of the sampling process on the set $H''$.}
\label{fig:hsampling}
\end{figure}

In the next section, we report the results of 
our experiments with our algorithm in several tasks in which
it outperforms the DM algorithm.

\section{Experiments}
\label{sec:experiments}
The results of extensive comparisons between GDM and several other
adaptation algorithms is presented in this section with favorable
results for our algorithm. 

\begin{figure}[t]
\centering
\begin{tabular}{cc}
(a) \includegraphics[scale=.32]{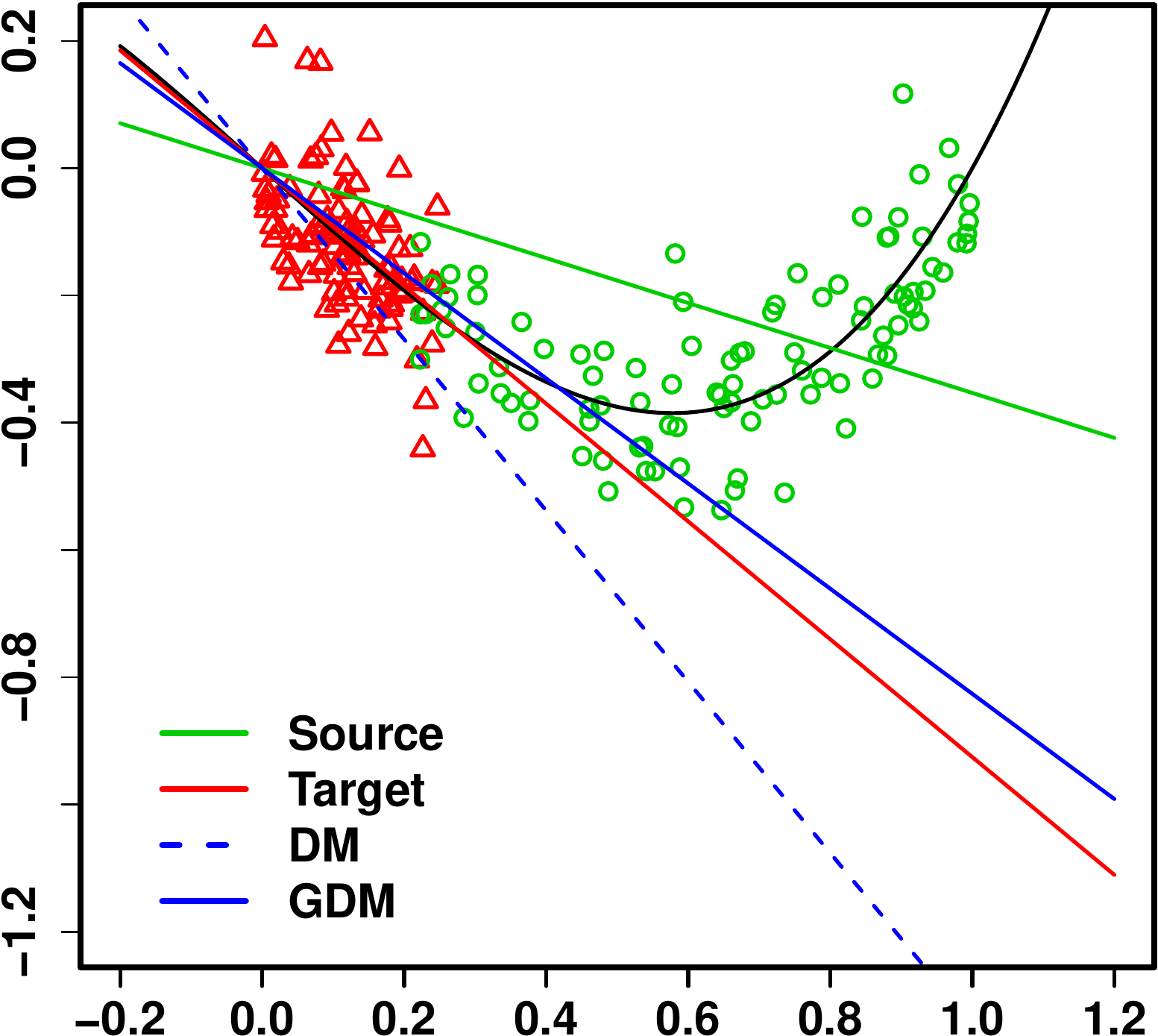} & 
(b) \includegraphics[scale=.32]{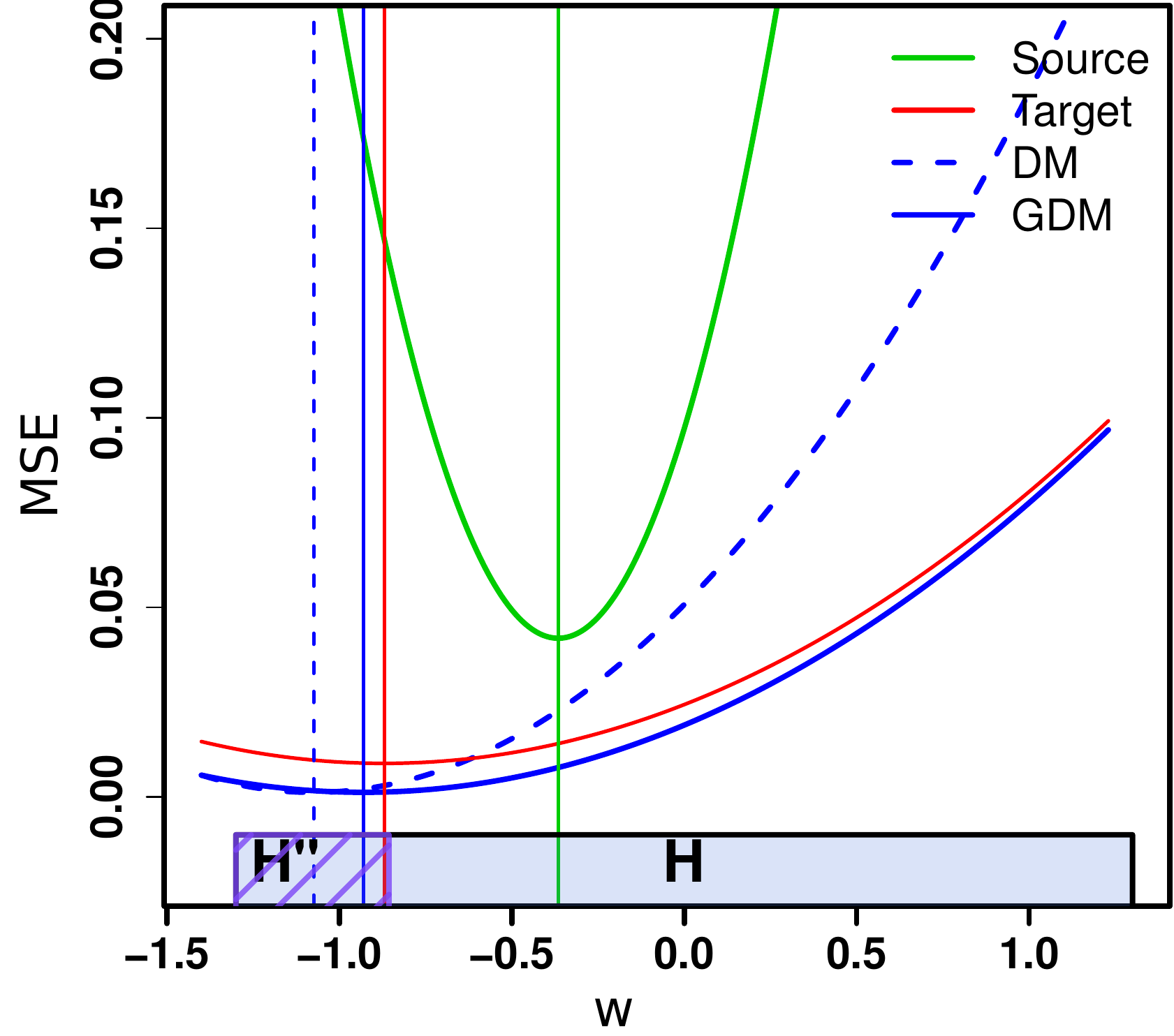}
 \end{tabular}
\caption{(a) Linear hypotheses obtained by training on the source (green circles),
  target (red triangles) and by using the DM (solid blue) and GDM
  algorithms (dashed blue).
 (b)Objective functions associated with training on the source
  distribution, target distribution as well as the GDM and DM
  algorithm. The hypothesis set $H$ and surrogate hypothesis set $H''$
  are shown at the bottom of the plot.}
\label{fig:classifiers}
\end{figure}

\subsection{Synthetic data set} 
In order to illustrate the differences between the GDM and DM
algorithms we generate the following synthetic task which is similar
to the one considered by
\cite{HuangSmolaGrettonBorgwardtScholkopf2006}: source distribution
examples are sampled from the uniform distribution over the interval
$[.2, 1]$ and target data is sampled uniformly over $[0, .25]$. The
labels are given by the map $x \mapsto -x + x^3 + \xi$ where $\xi$ is
a Gaussian random variable with mean $0$ and standard deviation
$0.1$. As hypothesis set we use linear functions without an
offset. Figure~\ref{fig:classifiers}(a) shows the regression
hypotheses obtained by training the DM and GDM algorithm as well as
training on the source and target distributions. The ideal hypothesis
is shown on red.  Notice how the GDM solution approaches the ideal
solution better than DM.  In order to better understand the difference
in the solutions of these algorithms Figure~\ref{fig:classifiers}(b)
depicts the objective function minimized by each algorithm as a
function of the slope $w$ of the linear function, the only variable of
the hypothesis. The vertical lines show the value of the minimizing
hypothesis for each loss. Keeping in mind that the regularization
parameter $\lambda$ used in ridge regression corresponds to a Lagrange
multiplier for the constraint $w^2 \leq \Lambda^2$ for some $\Lambda$
\citep{CortesMohri2013} [Lemma 1], the hypothesis set $H = \{ w | |w|
\leq \Lambda\}$ is depicted at the bottom of this plot. The shaded
region represents the set $H'' = H \cap \{h'' |\cL_{\qmin} (h'') \leq
r \}$. It is clear from this plot that DM helps approximate the target
loss function. Nevertheless, only GDM seems to uniformly approach
it. This should come as no surprise since our algorithm was designed
precisely for this purpose.

\subsection{Adaptation data sets}
We now present the results of  evaluating our algorithm against
several other adaptation algorithms. GDM is compared against
DM and training on the uniform distribution. The following baselines
were also considered:

\begin{enumerate}
\itemsep 0em
 \item The KMM algorithm,  which reweights examples
from the source distribution in an attempt to match the mean of the
source and target data in a feature space induced by a universal
kernel. The hyper-parameters of this algorithm were set to the
recommended values of $B = 1000$ and $\epsilon = \frac{\sqrt{m}}{\sqrt{m} - 1}$.
\item KLIEP. This
algorithm attempts to estimate the importance ratio of the source and
target distribution by modeling this ratio as a mixture of basis
functions and learning the mixture coefficients from the
data. Gaussian kernels were used as basis function where the 
bandwidth for the kernel was selected to be the best performer on the
\emph{test} set. 
\item FE.  This simple algorithm
maps source and target data into a common high-dimensional feature
space where the difference of the distributions is expected to
reduce. 
\end{enumerate}

Unless explicitly stated, our hypothesis set will be a subset of the
RKHS induced by a Gaussian kernel. The learning algorithm used for all
tasks will be kernel ridge regression and the reported risk will be the
mean square error. We follow the setup of
\cite{CortesMohri2011} and select regularization parameter $\lambda$
and Gaussian kernel bandwidth $\sigma$  via 10-fold cross
validation over the training data by doing a grid search for $\lambda
\in \{2^{-25}, \ldots, 2^{-5} \}$ and $\sigma \in \{k d| k = 2^{-10},
\ldots, 1 \}$ where $d$ is the dimensionality of the data.  Finally,
in view of Section~\ref{sec:guarantees}, the surrogate set $H''$ was
selected from the family $\mathscr{H} := \{H'' | H'' = \{h'' |
\cL_{\qmin}(h'') \leq r \vee \cL_{\h Q}(h'') \leq r \}, r \in [0,
\frac{1}{m} \sum_{i=1}^m{y_i^2}] \}$ through validation on a small
amount of data from the target distribution. For our comparisons
to be fair, all algorithms were allowed to use the small amount of
labeled data too. Since, with exception of FE, all other baselines do
not propose a way of dealing with labeled data from the target
distribution, we simply added this data to the training set and ran
the algorithms on the extended source data. 

The first task we consider is given by the 4 {\tt kin-8xy} Delve data sets
\citep{Delve}. These
data sets are all variations of the same model: a realistic simulation
of the forward dynamics of an 8 link all-revolute robot arm. The task
in all data sets is to predict the distance of the end-effector from a
target.  The data sets differ by the degree of non-linearity (fairly
linear , {\tts x=f}, or non-linear, {\tts x=n}) and the amount of
noise in the output (moderate, {\tts y=m} or high, {\tts y=h}). The
data set defines 4 different domains, that is 12 pairs of different
distributions and labeling functions. A sample of 200 points from each
domain was used and 10 labeled points from the target distribution
were used to select $H''$. The experiment was carried out 10 times and
the results of testing on a sample of $400$ points from the target
domain are reported in Figure~\ref{fig:kin}. The bars represent the
median performance of each algorithm. The error bars are the low and
high quartiles respectively. All results are normalized in such a way
that the median performance of training on the target is equal to
1. Since the source labeling function for this task is 
fairly linear, our hypotheses consist of vectors $\mat{w} \in \Rset^8$.
Notice that the performance of all algorithms is comparable when
adapting to {\tt kin8-fm} since both labeling functions are fairly
linear, yet only GDM is able to reasonably adapt to the two data sets
with different labeling functions.

\begin{figure}[t]
\centering
\includegraphics[height=2.5in, width=3.2in]{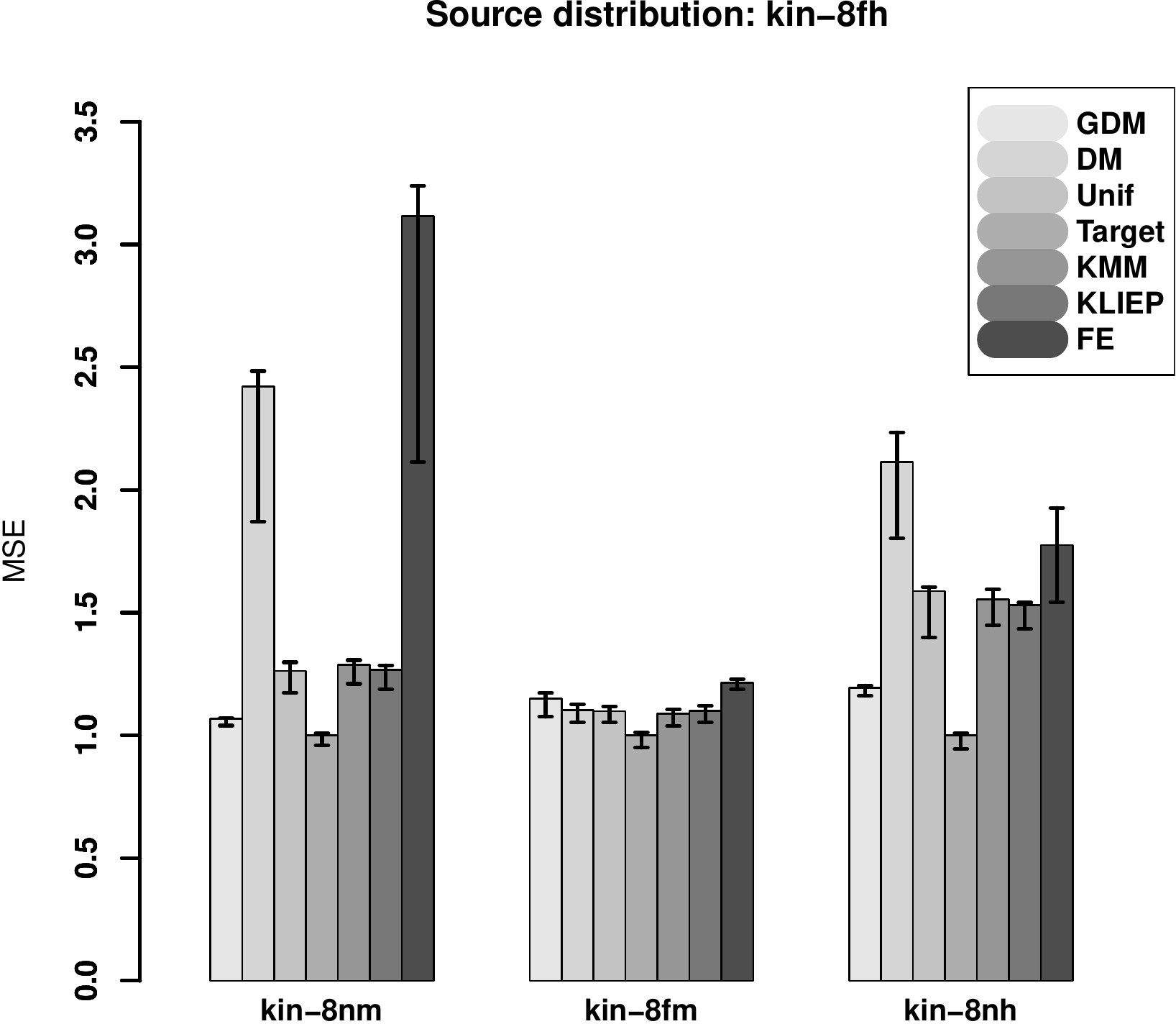}
\caption{MSE performance for different adaptation algorithms when
  adapting from {\tt kin-8fh} to the three other {\tt kin-8xy} domains.}
\label{fig:kin}
\end{figure}

\begin{figure}[t]
\begin{tabular}{cc}
(a) \includegraphics[height=2in, width=2.9in]{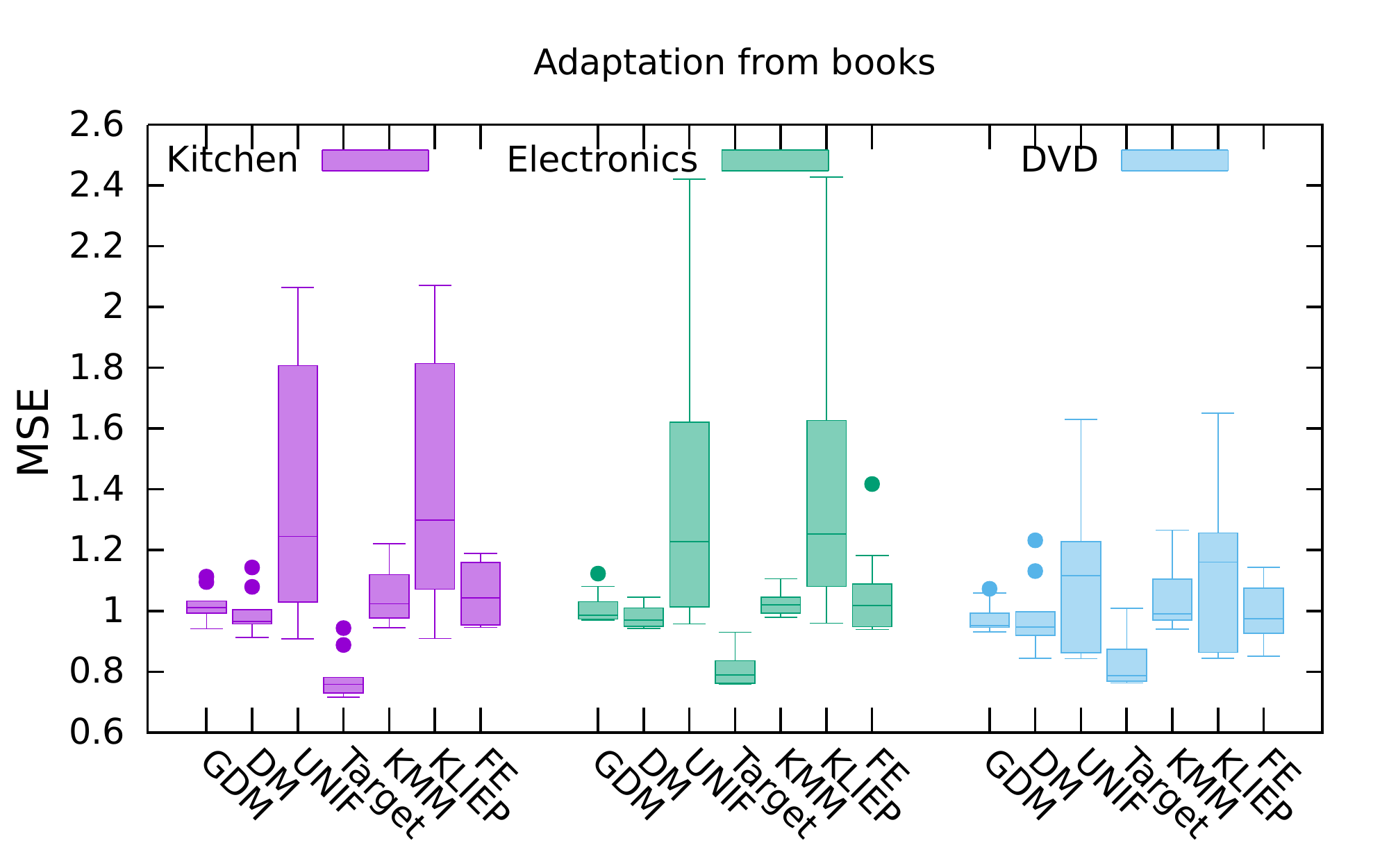} &
(b)\includegraphics[height=2in,width=2.9in]{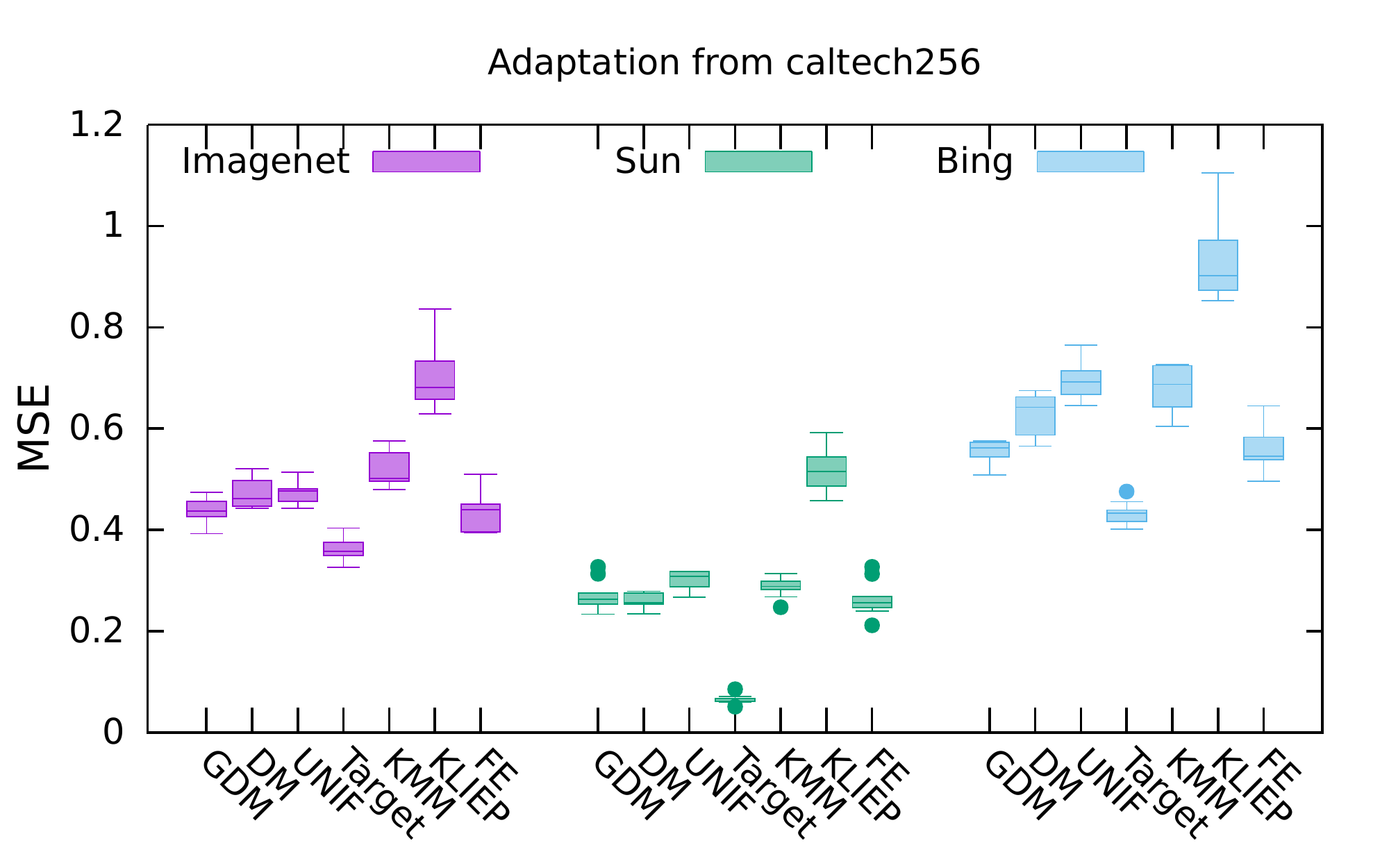}
\end{tabular}
\caption{(a) Performance for the sentiment adaptation task from the {\tt books} domain to all others.
(b) MSE of different algorithms adapting from the {\tt
caltech256} data set to all others.}
\label{fig:realworldsets}
\end{figure}

For our next experiment we consider the cross-domain sentiment
analysis data set of \cite{Blitzer07Biographies}. This data set consists of
consumer reviews from 4 different domains: {\tt books, kitchen, electronics}
and {\tt dvds}. We used the top 5000 unigrams and bigrams as the
features for this task. For each pair of  adaptation tasks we sample $700$
points from the source distribution and $700$ unlabeled points from the
target. Only $50$ labeled  points from the target distribution are
used to tune the parameter $r$ of our
algorithm. The final evaluation is done on a test set of $1000$
points. Figure~\ref{fig:realworldsets}(a) shows MSE of all
algorithms when adapting from {\tt books} to all other domains.

Finally, we consider a novel domain adaptation task \citep{TommasiTC14}
paramount in the computer vision
community. The domains correspond to 4 well known
collections of images: {\tt bing, caltech256, sun} and {\tt
  imagenet}. These data sets have been standardized so that they all share
the same feature representation and labeling
function \citep{TommasiTC14}. We use the data from the first 5 shared classes
and sample 800 labeled points from the source distribution and 800 unlabeled
points from the target distribution as well as 50 labeled target
points to be used for validation of $r$. The results of testing on
$1000$ points from the target domain are depicted in
Figure~\ref{fig:realworldsets}(b) where we trained on {\tt
  caltech256}. The results of all possible adaptation problems for the
sentiment task as well as for the image task are shown in
Appendix~\ref{app:experiments}. The results of this section show that
GDM was the only algorithm that could consistently perform better than
or on par with the DM algorithm, and it consistently outperforms other
algorithms. 

\section{Conclusion}

We presented a new theoretically well-founded domain adaptation
algorithm seeking to make the empirical loss closer to an ideal one
for \emph{each} hypothesis.  This departs from the existing paradigm
of a fixed reweighting for the training losses and leads to a new
theoretical analysis of adaptation. We presented both an SDP solution
for a specific convex set and a more general sampling-based QP
solution for solving the corresponding optimization problem. Our
empirical results show that our algorithm significantly outperforms
the state-of-the art DM algorithm.

\newpage
\bibliography{da}
\newpage
\appendix

\section{Supplementary material}

Here we set the value of $\Lambda$ that will define our hypothesis set
H. $\Lambda = \sqrt{\frac{\mu R}{\lambda}}$, that is $H =\set{h \in
\Hset \colon \| h \|_K \leq \sqrt{\frac{\mu R}{\lambda}}}$.  This does
not impose any additional constraint to the minimization
\eqref{eq:Pmin} as shown by the following lemma.

\begin{lemma}
  Let $h \in \Hset$ a be a solution of the minimization \eqref{eq:Pmin}
  for some training sample $\cS$, then $h$ satisfies the inequality $\|
  h \|_K \leq \sqrt{\frac{\mu R}{\lambda}}$, where $K(x, x) \leq R$ for
  all $x \in \cX$.
\end{lemma}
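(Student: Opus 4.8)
The plan is to exploit the first-order optimality of the regularized objective \eqref{eq:Pmin} together with the $\mu$-admissibility of the loss and the reproducing property of $\Hset$. I would compare the minimizer $h$ against the zero hypothesis, which also lies in $\Hset$. Since $h$ minimizes $F$ we have $F(h) \le F(0)$, and because the loss is nonnegative, $F(0) = \cL_{\h P}(0, f_P)$. Rearranging $F(h) \le F(0)$ isolates the penalty term on the left:
\begin{equation*}
\lambda \norm{h}{K}^2 \le \cL_{\h P}(0, f_P) - \cL_{\h P}(h, f_P),
\end{equation*}
leaving a difference of empirical losses on the right that I can control pointwise. Applying $\mu$-admissibility \eqref{eq:mu-admissible} to $h$ and the zero hypothesis gives $L(0, f_P(x)) - L(h(x), f_P(x)) \le \mu |h(x)|$ for every $x$, so that after taking the expectation over $\h P$ one obtains $\cL_{\h P}(0, f_P) - \cL_{\h P}(h, f_P) \le \mu\, \E_{x \sim \h P}[|h(x)|]$. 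Note that $f_P$ disappears through this admissibility step, which is crucial since the bound must not depend on the (unknown) target labeling function.

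The reproducing property together with Cauchy--Schwarz then controls each $|h(x)|$: we have $|h(x)| = |\langle h, K(x,\cdot)\rangle_K| \le \norm{h}{K}\sqrt{K(x,x)} \le \norm{h}{K}\sqrt{R}$ using $K(x,x) \le R$. Combining the three ingredients yields an inequality of the form
\begin{equation*}
\lambda \norm{h}{K}^2 \le \mu\sqrt{R}\,\norm{h}{K},
\end{equation*}
whose right-hand side is \emph{linear} in $\norm{h}{K}$ while the left-hand side is quadratic. The key structural observation is this homogeneity mismatch: if $h \ne 0$ I may divide both sides by $\norm{h}{K}$, collapsing the quadratic term and solving directly for $\norm{h}{K}$, while the degenerate case $h = 0$ lies trivially inside any ball. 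This division is exactly what produces a closed-form bound of order $\mu\sqrt{R}/\lambda$ matching $\Lambda$, and it certifies that the unconstrained minimizer already satisfies the norm constraint, so adding $\norm{h}{K} \le \Lambda$ does not alter problem \eqref{eq:Pmin}.

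The main obstacle I anticipate is purely the bookkeeping of the exact constant, that is, reconciling the powers of $R$ and $\lambda$ so that the final bound reads precisely $\sqrt{\mu R/\lambda}$; this hinges on the convention adopted for $R$ (whether $R$ bounds $K(x,x)$ or $\sqrt{K(x,x)}$) and on the normalization of the regularizer. Beyond this there is no deep difficulty, although one should be mindful to dispatch the $h=0$ case before dividing, and to note that $\mu$-admissibility is being invoked for the minimizer itself, which is legitimate here since the admissibility constant $\mu$ for the $L_p$ losses is established in Appendix~\ref{app:muadmissible}.
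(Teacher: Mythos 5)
Your proposal is correct and takes essentially the same route as the paper's own proof: compare the minimizer against the zero hypothesis, use nonnegativity of the loss and $\mu$-admissibility to get $\lambda \|h\|_K^2 \leq \mu \E_{x \sim \h P}\big[|h(x)|\big]$, then apply the reproducing property with Cauchy--Schwarz and divide out one factor of $\|h\|_K$. The constant-bookkeeping issue you flag is genuine but lies in the paper itself, not in your argument: the paper's proof concludes $\lambda \|h\|_K \leq \mu R$, i.e.\ a radius $\mu R / \lambda$, which agrees with the stated $\sqrt{\mu R/\lambda}$ only up to the paper's inconsistent conventions on whether $R$ bounds $K(x,x)$ or $\sqrt{K(x,x)}$.
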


\begin{proof}
  Since $0$ is an element of $\Hset$, the value of the objective
  function for the minimizer $h$ is upper bounded by the one for $0$:
\begin{equation}
\frac{1}{m} \sum_{i = 1}^m L(h(x_i), y_i) + \lambda \| h \|_K^2
\leq \frac{1}{m} \sum_{i = 1}^m L(0, y_i).
\end{equation}
By the $\mu$-admissibility of the loss, we can then write
\begin{align*}
  \lambda \| h \|_K^2 
\leq \frac{1}{m} \sum_{i = 1}^m L(0, y_i) - \frac{1}{m} \sum_{i = 1}^m L(h(x_i), y_i)
\leq \frac{\mu}{m} \sum_{i = 1}^m |0 - h(x_i)|
\leq \frac{\mu}{m} \sum_{i = 1}^m \| h \|_K K(x_i, x_i) 
\leq \mu R \| h \|_K,
\end{align*}
which implies $\lambda \| h \|_K \leq \mu R$ and concludes the proof.
\end{proof}

\section{SDP formulation}
\label{app:sdpdual}

\begin{replemma}{lemma:sdpdual}
The Lagrangian dual of the problem
\begin{align}
\label{eq:maxprob2}
  \max_{\substack{\a \in \Rset^m \\ \|\Ks \a - \y\|^2 \leq r^2}} 
& \ \frac{1}{2}\|\Kst \a\|^2 - \b^\top \Kt \Kst \b,
\end{align}
is given by
\begin{align*}
\min_{\eta \geq 0, \gamma} & \ \gamma \\  
\text{s. t.} & \ \left( 
\def\arraystretch{1.3}
\begin{array}{cc} 
 -\frac{1}{2} \Kst^\top \Kst + \eta \Ks^2  
& \frac{1}{2}\Kst^\top \Kt \b - \eta \Ks\y  \\
\frac{1}{2} \b^\top  \Kt \Kst - \eta \y^\top \Ks  
&  \eta (\|\y\|^2 - r^2) + \gamma 
\end{array}
\right) \succeq  0. 
\end{align*}
Furthermore, the duality gap for these problems is zero.
\end{replemma}

\begin{proof}
For $\eta \geq 0$ the Lagrangian of \eqref{eq:maxprob2} is given by
\begin{align*}
  L(\a, \eta) & = \frac{1}{2}\|\Kst \a\|^2 - \b^\top \Kt \Kst \a 
- \eta( \|\Ks \a - \y\|^2 - r^2) \\
& =\a^\top \Big(\frac{1}{2} \Kst^\top \Kst - \eta \Ks^2 \Big)
  \a +   (2 \eta \Ks \y - \Kst^\top \Kt \b )^\top  \a - \eta (\|\y\|^2
  - r^2).
\end{align*}
Since the Lagrangian is a quadratic function of $\a$ and that the
conjugate function of a quadratic can be expressed in terms of 
the pseudo-inverse, 
the dual is given by
\begin{align*}
  \min_{\eta \geq 0} & \ \frac{1}{4}(2 \eta \Ks \y - \Kst^\top \Kt
 \b)^\top \Big(\eta \Ks^2 - \frac{1}{2} \Kst^\top \Kst
 \Big)^{\dag}(2 \eta \Ks  \y - \Kst^\top \Kt \b) - \eta(\|\y\|^2 - r^2)\\
\text{s. t. } & \ \eta \Ks^2 - \frac{1}{2} \Kst^\top \Kst\succeq 0.
\end{align*}
Introducing the variable $\gamma$ to replace the objective
function yields the equivalent problem
\begin{flalign*}
\min_{\eta \geq 0, \gamma} & \gamma \\
\text{s. t. } & \ \eta \Ks^2 - \frac{1}{2} \Kst^\top \Kst \succeq
0 \\
& \gamma - \frac{1}{4} (2 \eta \Ks \y - \Kst^\top \Kt
  \b)^\top \Big(\eta \Ks^2 - \frac{1}{2}\Kst^\top \Kst\Big)^{\dag}
(2 \eta \Ks  \y - \Kst^\top \Kt \b) + \eta(\|\y\|^2 - r^2) \geq 0\\
\end{flalign*}
Finally, by the properties of the Schur complement
\citep{BoydVandenberghe2004}, the two constraints above are equivalent
to
\begin{equation*}
\left(
  \begin{array}{cc}
  -\frac{1}{2} \Kst^\top \Kst  + \eta \Ks^2 
& \frac{1}{2} \Kst^\top  \Kt \b - \eta\Ks \y \\
\Big(\frac{1}{2} \Kst^\top  \Kt \b - \eta\Ks \y\Big)^\top 
& \eta (\|\y\|^2 - r) + \gamma
\end{array}
\right) \succeq 0.
\end{equation*}
Since duality holds for a general QCQP with only one constraint
\citep{BoydVandenberghe2004}[Appendix B], the duality gap between
these problems is $0$.
\end{proof}

\begin{repproposition}{prop:cone}
The optimization problem \eqref{eq:kmaxmin} is equivalent to the
following SDP:
\begin{align*}
\max_{\alpha, \beta, \nu, \Z, \z} & \ \frac{1}{2} \Tr(\Kst^\top \Kst \Z)
- \beta - \alpha\\
\text{s. t} & \ \left(
\def\arraystretch{1.3}
\begin{array}{cc}
\nu \Ks^2 + \frac{1}{2}\Kst^\top \Kst - \frac{1}{4} \wt{\mat K}
& \nu  \Ks \y + \frac{1}{4}\wt{\mat K} \z \\
\nu  \y^\top \Ks + \frac{1}{4} \z^\top \wt{\mat K}
&\alpha + \nu (\|\y\|^2 - r^2)
\end{array}
\right) \succeq 0 \quad \wedge \quad
\left(
\begin{array}{cc}
\Z & \z \\
\z^\top & 1
\end{array}
\right) \succeq 0 \\
& \ \left(
\def\arraystretch{1.3}
\begin{array}{cc}
\lambda \Kt + \Kt^2 & \frac{1}{2} \Kt \Kst \z \\
\frac{1}{2} \z^\top \Kst^\top \Kt & \beta
\end{array}
\right) \succeq 0
\quad \wedge \quad \Tr(\Ks^2 \Z) - 2\y^\top \Ks \z + \|\y\|^2 \leq r^2
\quad \wedge \quad \nu \geq 0,
\end{align*}
where $\wt{\mat K} = \Kst^\top \Kt (\lambda \Kt + \Kt^2)^\dag \Kt \Kst$.
\end{repproposition}

\begin{proof}
By Lemma~\ref{lemma:sdpdual}, we may rewrite
\eqref{eq:kmaxmin} as

\begin{align}
\label{eq:firstequiv}
  \min_{\a, \gamma , \eta, \b}
 & \ \b^\top (\lambda \Kt + \Kt^2) \b + \frac{1}{2}
   \a^\top   \Kst^\top \Kst \a - \a^\top  \Kst^\top \Kt \b + \gamma \\
\text{s. t. } & \
 \left(
\def\arraystretch{1.3} 
\begin{array}{cc} 
 -\frac{1}{2} \Kst^\top \Kst + \eta \Ks^2  &
 \frac{1}{2}\Kst^\top \Kt \b - \eta \Ks \y \\
\frac{1}{2} \b^\top  \Kt \Kst - \eta \y^\top \Ks &
 \eta (\|\y\|^2 - r^2) + \gamma 
\end{array}
\right)  \quad \wedge \quad \eta \geq 0 \nonumber \\
& \ \|\Ks \a - \y\|^2 \leq r^2. \nonumber
\end{align}
Let us apply the change of variables $\b = \frac{1}{2}(\lambda \Kt +
\Kt^2)^{\dag} \Kt \Kst \a +\v$. The following equalities can be easily
verified.
\begin{align*}
\b^\top (\lambda \Kt + \Kt^2) \b
&  = \frac{1}{4} \a^\top \Kst^\top \Kt (\lambda \Kt + \Kt^2)^\dag \Kt \Kst \a + \v^\top \Kt \Kst \a + \v^\top (\lambda \Kt + \Kt^2) \v.\\
\a^\top \Kst^\top  \Kt \b
& = \frac{1}{2} \a^\top \Kst^\top \Kt (\lambda \Kt + \Kt^2)^\dag \Kt \Kst \a + \v^\top \Kt \Kst \a.
\end{align*}
Thus, replacing $\b$ on \eqref{eq:firstequiv} yields
\begin{align*}
  \min_{\a, \v, \gamma , \eta}
 & \ \v^\top (\lambda \Kt + \Kt^2) \v + \
   \a^\top \Big(\frac{1}{2} \Kst^\top \Kst
   - \frac{1}{4}\wt{\mat K} \Big)\a + \gamma\\
\text{s. t. } & \
 \left(
\def\arraystretch{1.3} 
\begin{array}{cc} 
 -\frac{1}{2} \Kst^\top \Kst + \eta \Ks^2 
& \frac{1}{4} \wt{\mat K}\a + \frac{1}{2}\Kst^\top \Kt \v - \eta \Ks \y \\
\frac{1}{4} \a^\top \wt{\mat K} + \frac{1}{2}\v^\top \Kt \Kst 
- \eta \y^\top \Ks
& \eta (\|\y\|^2 - r^2) + \gamma 
\end{array}
\right) \succeq 0  \quad \wedge \quad \eta \geq 0 \nonumber \\
& \ \|\Ks \a - \y\|^2 \leq r^2. \nonumber
\end{align*}
Introducing the scalar multipliers $\mu, \nu \geq 0$ and the matrix 
\begin{equation*}
\left(
\begin{array}{cc}
\Z & \z \\
\z^\top & \wt z,
\end{array}
\right) \succeq 0
\end{equation*}
as a multiplier for the matrix constraint, we can form the Lagrangian:
\begin{multline*}
\mathfrak{L} := \v^\top (\lambda \Kt + \Kt^2) \v 
+ \a^\top \Big(\frac{1}{2} \Kst^\top \Kst - \frac{1}{4}\wt{\mat K} \Big)\a
+ \gamma - \mu \eta + \nu (\|\Ks \a - \y\|^2 - r^2) \\
- \Tr  \left( \left(
\begin{array}{cc}
\Z & \z \\
\z & \wt z
\end{array}
\right)
\left( 
\def\arraystretch{1.3}
\begin{array}{cc} 
 -\frac{1}{2} \Kst^\top \Kst + \eta \Ks^2 
& \frac{1}{4} \wt{\mat K}\a + \frac{1}{2}\Kst^\top \Kt \v - \eta \Ks \y \\
\frac{1}{4} \a^\top \wt{\mat K} + \frac{1}{2}\v^\top \Kt \Kst 
- \eta \y^\top \Ks
& \eta (\|\y\|^2 - r^2) + \gamma 
\end{array}
\right) \right). 
\end{multline*}
The KKT conditions $\frac{\partial \mathfrak L}{\partial \eta} =
\frac{\partial \mathfrak L}{\partial \gamma} = 0$ trivially imply $\wt
z= 1$ and $\Tr(\Ks^2 \Z) - 2\y^\top \Ks \z + \|\y\|^2 - r^2 + \mu = 0$.
These constraints on the dual variables guarantee that the primal
variables $\eta$ and $\gamma$ will vanish from the Lagrangian, thus
yielding
\begin{multline*}
\mathfrak{L} = \frac{1}{2} \Tr(\Ks^2 \Z) + \nu(\|\y\|^2 - r^2)
+ \v^\top (\lambda \Kt + \Kt^2) \v^\top - \z^\top \Kst^\top \Kt \v\\
+ \a^\top\Big(\nu \Ks^2 + \frac{1}{2}\Kst^\top \Kst - \frac{1}{4} \wt{\mat K}\Big) \a -\Big(2 \nu  \Ks \y + \frac{1}{2}\wt{\mat K} \z\Big)^\top \a. 
\end{multline*}
This is a quadratic function on the primal variables $\a$ and $\v$
with minimizing solutions
\begin{equation*}
\a = \frac{1}{2} \Big(\nu \Ks^2 + \frac{1}{2}\Kst^\top \Kst - \frac{1}{4} \wt{\mat K}\Big)^\dag \Big(2 \nu  \Ks \y + \frac{1}{2}\wt{\mat K} \z\Big) 
\qquad \text{and} \qquad
\v = \frac{1}{2}(\lambda \Kt +  \Kt^2)^{\dag}\Kt \Kst \z,
\end{equation*}
and optimal value equal to the objective of the Lagrangian dual:
\begin{multline*}
 \frac{1}{2} \Tr(\Kst^\top \Kst \Z) + \nu(\|\y\|^2 - r^2)
- \frac{1}{4} \z^\top \wt{\mat K} \z  \\
- \frac{1}{4} \Big(2 \nu  \Ks \y + \frac{1}{2}\wt{\mat K} \z\Big)^\top
\Big(\nu \Ks^2 + \frac{1}{2}\Kst^\top \Kst - \frac{1}{4} \wt{\mat K}\Big)^\dag
\Big(2 \nu  \Ks \y + \frac{1}{2}\wt{\mat K} \z\Big).
\end{multline*}
As in Lemma~\ref{lemma:sdpdual}, we apply the properties of the Schur
complement to show that the dual is given by
\begin{align*}
\max_{\alpha, \beta, \nu, \Z, \z} & \ \frac{1}{2} \Tr(\Kst^\top \Kst \Z)
- \beta - \alpha\\
\text{s. t} & \ \left(
\def\arraystretch{1.3}
\begin{array}{cc}
\nu \Ks^2 + \frac{1}{2}\Kst^\top \Kst - \frac{1}{4} \wt{\mat K}
& \nu  \Ks \y + \frac{1}{4}\wt{\mat K} \z\  \\
\nu \y^\top  \Ks + \frac{1}{4} \z^\top \wt{\mat K}
&\alpha + \nu (\|\y\|^2 - r^2)
\end{array}
\right) \succeq 0 \quad \wedge \quad
\left(
\begin{array}{cc}
\Z & \z \\
\z^\top & 1
\end{array}
\right) \succeq 0 \\
& \ \Tr(\Ks^2 \Z) - 2\y^\top \Ks \z + \|\y\|^2 \leq r^2
\quad \wedge \quad \beta \geq \frac{1}{4} \z^\top \wt{\mat K} \z 
\quad \wedge \quad \nu \geq 0
\end{align*}
Finally, recalling the definition of $\wt{\mat K}$ and using the
Schur complement one more time we arrive to the final SDP formulation:
\begin{align*}
\max_{\alpha, \beta, \nu, \Z, \z} & \ \frac{1}{2} \Tr(\Kst^\top \Kst \Z)
- \beta - \alpha\\
\text{s. t} & \ \left(
\def\arraystretch{1.3}
\begin{array}{cc}
\nu \Ks^2 + \frac{1}{2}\Kst^\top \Kst - \frac{1}{4} \wt{\mat K}
& \nu  \Ks \y + \frac{1}{4}\wt{\mat K} \z \\
\nu  \y^\top \Ks + \frac{1}{4} \z^\top \wt{\mat K}
&\alpha + \nu (\|\y\|^2 - r^2)
\end{array}
\right) \succeq 0 \quad \wedge \quad
\left(
\begin{array}{cc}
\Z & \z \\
\z^\top & 1
\end{array}
\right) \succeq 0 \\
& \ \left(
\def\arraystretch{1.3}
\begin{array}{cc}
\lambda \Kt + \Kt^2 & \frac{1}{2} \Kt \Kst \z \\
\frac{1}{2} \z^\top \Kst^\top \Kt & \beta
\end{array}
\right) \succeq 0
\quad \wedge \quad \Tr(\Ks^2 \Z) - 2\y^\top \Ks \z + \|\y\|^2 \leq r^2
\quad \wedge \quad \nu \geq 0.
\end{align*}
\end{proof}

\section{QP  formulation}
\label{app:qpformula}

\begin{repproposition}{prop:dual}
Let $\mat Y =(Y_{ij}) \in \Rset^{n \times k}$ be the
matrix defined by $Y_{ij} = n^{-1/2} h_j(x_i')$ and $\y' = (y'_1,
\ldots, y'_k)^\top \in \Rset^k $ the vector defined by $y'_i = n^{-1}
\sum_{j=1}^n h_i(x'_j)^2$. Then, the dual problem of
\eqref{eq:optmaxapp} is given by
\begin{align}
\label{eq:dual}
\max_{\bm \alpha, \bm \gamma, \beta} & \ -\Big(\mat Y \bm \alpha + \frac{\bm
  \gamma}{2} \Big)^\top \Kt\Big(\lambda \I + \frac{1}{2}\Kt\Big)^{-1}
\Big(\mat Y \bm \alpha  + \frac{\bm \gamma}{2}\Big) - \frac{1}{2} \bm
\gamma^\top \Kt \Kt^\dag \bm \gamma +  \bm \alpha^\top \y' - \beta\\
\text{s.t.} & \ \1^\top \bm \alpha = \frac{1}{2}, \qquad  \1
\beta \geq -\mat Y^\top \bm \gamma, \qquad \bm \alpha\geq
0, \nonumber
\end{align}
where $\1$ is the vector in $\Rset^k$ with all components equal to
$1$. Furthermore, the solution $h$ of \eqref{eq:optmaxapp} can be
recovered from a solution $(\bm \alpha, \bm \gamma, \beta)$ of
\eqref{eq:dual} by $\forall x, h(x) =\sum_{i = 1}^n a_i K(x_i, x)$,
where $\bm a = \big(\lambda \I + \frac{1}{2}\Kt)^{-1}(\mat Y \bm
\alpha + \frac{1}{2}\bm \gamma)$.
\end{repproposition}

We will first prove a simplified version of the proposition for the
case of linear hypotheses, i.e. we can represent hypotheses in $\Hset$
and elements of $\cX$ as vectors $\w, \x \in \Rset^d$
respectively. Define $\X' = n^{-1/2} (\x_1', \ldots, \x_n')$ to be the
matrix whose columns are the normalized sample points from the target
distribution. Let also $\{\w_1, \ldots, \w_k\}$ be a sample taken from
$\partial H''$ and define $\bm W := (\w_1, \ldots, \w_k) \in \Rset^{d
  \times k}$. Under this notation, problem \eqref{eq:optmaxapp} may be
rewritten as
\begin{equation}
\label{eq:linoptimization}
\min_{\w \in \Rset^d} \lambda \|\w\|^2 + \frac{1}{2} \max_{i =1,
  \dots, k} \|\X'^\top(\w - \w_i)\|^2 + \frac{1}{2} \min_{\w' \in
  \mathcal C} \|\X'^\top (\w - \w')\|^2
\end{equation}

\begin{lemma}
The Lagrange dual of problem \eqref{eq:linoptimization} is given by
\begin{align*}
\max_{\bm \alpha, \bm \gamma, \beta}& \  -\Big(\mat Y \bm \alpha + \frac{\bm \gamma}{2}\Big)^\top
\X'^\top \Big(\lambda \I + \frac{\X' \X'^\top}{2}\Big)^{-1}\X' \Big(\bm
Y \bm\alpha + \frac{\bm \gamma}{2}\Big) - \frac{1}{2}\bm \gamma^\top \X'^\top (\X' \X'^\top)^\dag
\X' \bm \gamma + \bm \alpha^\top \y' - \beta\\
\text{s. t.} & \ \1^\top \bm \alpha = \frac{1}{2} \quad \quad  \1
\beta \geq- \mat Y^\top \bm \gamma \quad \quad \bm \alpha\geq
0,
\end{align*}
where $\mat Y = \X'^\top \bm W$ and $\y'_i =\|\X'^\top \w_i\|^2.$
\end{lemma}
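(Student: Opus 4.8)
The plan is to recast \eqref{eq:linoptimization} as a single \emph{jointly} convex minimization and then form its Lagrangian. First I would linearize the two nonsmooth pieces. For the maximum, introduce an epigraph variable $\tau$ together with the $k$ constraints $\|\X'^\top(\w-\w_i)\|^2 \le \tau$, replacing $\tfrac12\max_i\|\cdots\|^2$ by $\tfrac12\tau$ in the objective; keeping the factor $\tfrac12$ \emph{outside} the epigraph is exactly what will later force $\1^\top\bm\alpha=\tfrac12$ rather than $1$. For the minimum, parametrize the convex hull $\mathcal{C}$ by the simplex, writing $\w'=\bm W\bm\theta$ with $\1^\top\bm\theta=1$, $\bm\theta\ge0$, so that the inner $\min_{\w'\in\mathcal{C}}$ merges into a joint minimization over $\bm\theta$ and $\tfrac12\min_{\w'\in\mathcal{C}}\|\X'^\top(\w-\w')\|^2$ becomes $\tfrac12\|\X'^\top\w-\mat Y\bm\theta\|^2$ using $\mat Y=\X'^\top\bm W$. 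The reformulated problem is convex (a squared distance to a convex set is convex, and the added constraints are convex quadratics and a simplex) and satisfies Slater's condition (take $\bm\theta$ in the relative interior of the simplex and $\tau$ large), so strong duality holds and the computed dual is exact.

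Next I would attach multipliers $\bm\alpha\ge0$ to the $k$ quadratic constraints, $\beta$ to the equality $\1^\top\bm\theta=1$, and $\bm\delta\ge0$ to $\bm\theta\ge0$, and minimize the Lagrangian over $(\w,\tau,\bm\theta)$. Stationarity in $\tau$ gives $\1^\top\bm\alpha=\tfrac12$ and removes $\tau$. Expanding $\sum_i\alpha_i\|\X'^\top(\w-\w_i)\|^2$ and using $\1^\top\bm\alpha=\tfrac12$, the $\w$-quadratic contributed by the maximum term is exactly $\tfrac12\w^\top\X'\X'^\top\w$, while the linear and constant pieces read off from $\sum_i\alpha_i\X'^\top\w_i=\mat Y\bm\alpha$ and $\sum_i\alpha_i\|\X'^\top\w_i\|^2=\bm\alpha^\top\y'$. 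The key move is stationarity in $\bm\theta$: setting $\bm\gamma:=\mat Y\bm\theta-\X'^\top\w=\X'^\top(\w'-\w)$, the negated residual of the $\min$ term, the condition $-\mat Y^\top(\X'^\top\w-\mat Y\bm\theta)+\beta\1-\bm\delta=0$ becomes $\bm\delta=\mat Y^\top\bm\gamma+\beta\1$, and imposing $\bm\delta\ge0$ yields precisely the dual constraint $\1\beta\ge-\mat Y^\top\bm\gamma$.

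Finally I would substitute $\bm\gamma$ back. Using $\mat Y\bm\theta=\X'^\top\w+\bm\gamma$, the $\min$ term contributes $\tfrac12\|\bm\gamma\|^2$ and $(\beta\1-\bm\delta)^\top\bm\theta=-\bm\gamma^\top\mat Y\bm\theta$ contributes $-\bm\gamma^\top\X'^\top\w-\|\bm\gamma\|^2$, so the only surviving $\w$-quadratic is $\w^\top(\lambda\I+\tfrac12\X'\X'^\top)\w$ with linear part $-\w^\top\X'(2\mat Y\bm\alpha+\bm\gamma)$; minimizing this unconstrained quadratic in $\w$ produces the first dual term $-(\mat Y\bm\alpha+\tfrac{\bm\gamma}{2})^\top\X'^\top(\lambda\I+\tfrac12\X'\X'^\top)^{-1}\X'(\mat Y\bm\alpha+\tfrac{\bm\gamma}{2})$ along with the leftover $-\tfrac12\|\bm\gamma\|^2+\bm\alpha^\top\y'-\beta$. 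The main obstacle, and the one step that needs genuine care, is passing from $-\tfrac12\|\bm\gamma\|^2$ to the stated $-\tfrac12\bm\gamma^\top\X'^\top(\X'\X'^\top)^\dag\X'\bm\gamma$: since $\bm\gamma=\X'^\top(\w'-\w)$ always lies in $\operatorname{range}(\X'^\top)$, the projection $P=\X'^\top(\X'\X'^\top)^\dag\X'$ acts as the identity on it, so $\|\bm\gamma\|^2=\bm\gamma^\top P\bm\gamma$; equivalently, $\bm\gamma$ enters the first term only through $\X'\bm\gamma=\X' P\bm\gamma$, and the component of $\bm\gamma$ orthogonal to $\operatorname{range}(\X'^\top)$ only lowers the objective, so the maximizer has none and the two forms agree. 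Writing the projected form is what makes the dual well posed as a free maximization over $\bm\gamma\in\Rset^n$ and is precisely the form that kernelizes into $\Kt\Kt^\dag$ in Proposition~\ref{prop:dual}.
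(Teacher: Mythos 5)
Your proposal is correct and lands on exactly the stated dual, but it reaches it by a structurally different decomposition than the paper. The paper first expands the max term so the epigraph constraints are \emph{affine} ($\1 r \geq \y' - 2\mat Y^\top \X'^\top \w$), and, crucially, it keeps the difference $\bu = \w' - \w$ as an explicit primal variable tied to the simplex by the equality constraint $\bm W \bm\mu - \w = \bu$; the vector $\bm\gamma'$ is then an honest Lagrange multiplier for that equality, the KKT condition $\X'\X'^\top \bu = \bm\gamma'$ forces $\bm\gamma' = \X'\bm\gamma$ (this is where the range condition comes from), and the pseudo-inverse term $-\tfrac{1}{2}\bm\gamma^\top \X'^\top(\X'\X'^\top)^\dag \X'\bm\gamma$ falls out of the partial minimization $\min_{\bu} \tfrac{1}{2}\|\X'^\top\bu\|^2 - \bm\gamma'^\top\bu$. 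You instead keep the quadratic epigraph constraints (harmless, since $\1^\top\bm\alpha = \tfrac{1}{2}$ aggregates them to the same quadratic), eliminate $\w'$ directly via $\w' = \bm W\bm\theta$ with no auxiliary variable, and obtain $\bm\gamma$ not as a multiplier but as the stationarity residual $\mat Y\bm\theta - \X'^\top\w$, recovering the pseudo-inverse term by the projection identity on $\operatorname{range}(\X'^\top)$. Both routes are valid; what the paper's buys is that weak duality over all $(\bm\alpha, \beta, \bm\gamma)$ is automatic because $\bm\gamma$ reparametrizes a genuine multiplier, whereas in your route the promotion of the residual $\bm\gamma$ to a free dual variable deserves one more line: for fixed multipliers $(\bm\alpha, \beta, \bm\delta)$ with $\bm\delta - \beta\1 = \mat Y^\top\bm\gamma_0$, the inner minimization over $\bm\theta$ ranges only over $\mat Y\bm\theta \in \operatorname{range}(\mat Y)$, and one should check that the true dual value equals the \emph{maximum} of your expression over all $\bm\gamma$ in the affine set $\bm\gamma_0 + \ker(\mat Y^\top)$ — a short computation with the projection onto $\operatorname{range}(\mat Y)$ confirms this, so no overshoot occurs, but as written this step is implicit. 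What your route buys in exchange is a leaner primal (no $\bu$, no equality multiplier), an explicit Slater/strong-duality justification that the paper's proof of this lemma omits, and a projection argument for the $(\X'\X'^\dag)$-term that makes transparent why the dual is well posed over free $\bm\gamma \in \Rset^n$ and why it kernelizes to the $\Kt\Kt^\dag$ term of Proposition~\ref{prop:dual}.
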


\begin{proof}
By applying the change of variable $\bu = \w' - \w$, problem
\eqref{eq:linoptimization} is can be made equivalent to 
\begin{equation*}
  \min_{\w \in \Rset^d \bu \in \mathcal{C} - \w} \lambda  \|\w\|^2 +
  \frac{1}{2} \|\X'^\top  \w\|^2 +
  \frac{1}{2}\|\X'^\top u\|^2 + \frac{1}{2}\max_{i=1,\ldots, k} \|\bm
  \X'^\top \w_i\|^2 -2
  \w_i^ \top \X' \X'^\top \w.
\end{equation*}
By making the constraints on $\bu$ explicit and replacing the
maximization term with the variable $r$ the above problem becomes
\begin{align*}
  \min_{\w, \bu , r, \bm \mu}
& \quad \lambda \|\w\|^2 +   \frac{1}{2}
  \|\X'^\top  \w\|^2 +  \frac{1}{2}\|\X'^\top \bu\|^2 + \frac{1}{2}r \\
\text{s. t.}
& \quad \1 r \geq \y' - 2 \mat Y^\top \X'^\top  \w \\
& \quad \1^\top \bm \mu =1 \qquad \bm \mu \geq 0 \qquad \bm W \bm \mu - \w = \bu.
\end{align*}
For $\bm \alpha, \bm \delta \geq 0$, the Lagrangian of this problem is
defined as
\begin{align*}
\mathfrak{L}(\w, \bu, \bm \mu, r, \bm \alpha, \beta, \bm \delta, \bm \gamma') 
& = \lambda \|\w\|^2 +   \frac{1}{2}  \|\X'^\top  \w\|^2 
+ \frac{1}{2}\|\X'^\top \bu\|^2 + \frac{1}{2}r 
+ \bm \alpha^\top(\y' - 2(\X' \mat Y)^\top \w - \1 r) \\
& \mspace{40mu} + \beta(\1^\top  \bm \mu - 1) - \bm \delta^\top \bm \mu 
+ \bm \gamma'^\top(\bm W \bm  \mu - \w - \bu).
\end{align*}
Minimizing with respect to the primal variables yields the following
KKT conditions:
\begin{align}
  \1^\top \bm \alpha = \frac{1}{2} & \quad \quad
  \1\beta  = \bm \delta -\bm W^\top \bm \gamma'. \label{eq:linear}\\
  \bm \X'\X'^\top \bu = \bm \gamma' & \quad \quad
 2 \left(\lambda \I + \frac{\X' \X'^\top}{2}\right)\w = 2(\X' \bm
 Y)\bm \alpha + \bm \gamma' \label{eq:quad}
\end{align}
Condition \eqref{eq:linear} implies that the terms involving $r$
and $\bm \mu$ will vanish from the Lagrangian. Furthermore, the first
equation in \eqref{eq:quad} implies that any feasible $\bm \gamma'$
must satisfy $\bm \gamma' = \X' \bm \gamma$ for some $\gamma \in
\Rset^n$. Finally, it is immediate that $\bm \gamma'^\top \bu =
\bu^\top \X' \X'^\top \bu$ and $2\w ^\top\left(\lambda \I + \frac{\X'
\X'^\top}{2}\right) \w = 2 \bm \alpha^\top (\X' \mat Y)^\top \w \bm
+ \bm \gamma'^\top \w$. Thus, at the optimal point, the Lagrangian
becomes
 \begin{align*}
&  \quad  -\w^\top \Big(\lambda \I + \frac{1}{2} \X' \X'^\top\Big) \w 
- \frac{1}{2} \bu^\top  \X' \X'^\top  \bu + \bm \alpha^\top \y' - \beta \\
\text{s. t.} 
& \quad \1^\top \bm \alpha = \frac{1}{2} \quad \quad  \1
\beta = \bm \delta - \bm W^\top  \bm \gamma' \quad \quad \bm \alpha\geq 0 \wedge
\bm \delta \geq 0.
 \end{align*}
The positivity of $\bm \delta$ implies that $\1\beta \geq -\bm
W^\top \bm \gamma'$. Solving for $\w$ and $\bu$ on \eqref{eq:quad} and
applying the change of variable $\X' \bm \gamma = \bm\gamma'$ we obtain
the final expression for the dual problem:
\begin{align*}
\max_{\bm \alpha, \bm \gamma, \beta}
& \  -\Big(\mat Y \bm \alpha + \frac{\bm \gamma}{2}\Big)^\top
\X'^\top \Big(\lambda \I + \frac{\X' \X'^\top}{2}\Big)^{-1}\X' \Big(\bm
Y \bm\alpha + \frac{\bm \gamma}{2}\Big) - \frac{1}{2}\bm \gamma^\top \X'^\top (\X' \X'^\top)^\dag
\X' \bm \gamma + \bm \alpha^\top \y' - \beta\\
\text{s. t.} 
& \ \1^\top \bm \alpha = \frac{1}{2} \quad \quad  \1
\beta \geq- \mat Y^\top \bm \gamma \quad \quad \bm \alpha\geq 0,
\end{align*}
where we have used the fact that $\mat Y^\top \bm \gamma = \W \X'^\top
\bm \gamma$ to simplify the constraints. Notice also that we can
recover the solution $\w$ of problem \eqref{eq:linoptimization} as $\w
= (\lambda \I + \frac{1}{2} \X'^\top \X')^{-1}\X'( \mat Y \bm \alpha +
\frac{1}{2}\bm \gamma)$
\end{proof}

Using the matrix identities $ \X'(\lambda \I + \X'^\top \X')^{-1} =
(\lambda \I + \X' \X'^\top) \X'$ and $\X'^\top \X'
(\X'^\top \X')^\dag  =  \X'^\top (\X' \X'^\top) ^\dag
\X'$, the proof of Proposition~\ref{prop:dual} is now immediate.
\\

\begin{proof}[Proposition~\ref{prop:dual}] 
We can rewrite the dual objective of the
previous lemma in terms of the Gram matrix $\X'^\top \X'$ alone as follows:
\begin{align*}
\max_{\bm \alpha, \bm \gamma, \beta}
& \  -\Big(\mat Y \bm \alpha + \frac{\bm \gamma}{2}\Big)^\top
\X'^\top \X' \Big(\lambda \I + \frac{\X'^\top \X'}{2}\Big)^{-1} \Big(\bm
Y \bm\alpha + \frac{\bm \gamma}{2}\Big) - \frac{1}{2}\bm \gamma^\top
\X'^\top \X'(\X'^\top \X')^\dag \bm \gamma + \bm \alpha^\top \y' - \beta\\
\text{s. t.}
& \ \1^\top \bm \alpha = \frac{1}{2} \quad \quad  \1
\beta \geq- \mat Y^\top \bm \gamma \quad \quad \bm \alpha\geq 0.
\end{align*}
By replacing $\X'^\top \X'$ by the more general kernel matrix $\bm
\Kt$ (which corresponds to the Gram matrix in the feature space) we
obtain the desired expression for the dual. Additionally, the same
matrix identities applied to condition \eqref{eq:quad} imply that
the optimal hypothesis $h$ is given by $h(x) =\sum_{i=1}^n a_i
K(x_i',x)$ where $\bm a = (\lambda \I + \frac{1}{2}\Kt)^{-1}(\mat Y
\bm \alpha + \frac{\bm \gamma}{2})$.
\end{proof}

\section{$\mu$-admissibility}
\label{app:muadmissible}

\begin{lemma}[Relaxed triangle inequality]
\label{lemma:relaxed}
For any $p \geq 1$, let $L_p$ be the loss defined over $\Rset^N$ by
$L_p(\x, \y) = \| \y - \x \|^p$ for all $\x, \y \in \Rset^N$. Then,
the following inequality holds for all $\x, \y, \z \in \Rset^N$:
\begin{equation*}
L_p(\x, \z) \leq 2^{q -1} [ L_p(\x, \y) + L_p(\y, \z) ].
\end{equation*}
\end{lemma}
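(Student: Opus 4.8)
The plan is to reduce the bound to two elementary ingredients: the ordinary triangle inequality for the Euclidean norm on $\Rset^N$, and the scalar convexity inequality $(a+b)^p \le 2^{p-1}(a^p + b^p)$, valid for all $a, b \ge 0$ whenever $p \ge 1$. This second fact supplies the constant $2^{q-1}$ appearing in the statement (here $q = p$).

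First I would apply the triangle inequality to the three points, writing $\|\z - \x\| \le \|\y - \x\| + \|\z - \y\|$. Since $p \ge 1$, the map $t \mapsto t^p$ is nondecreasing on $[0, \infty)$, so raising both nonnegative sides to the power $p$ preserves the inequality and gives $L_p(\x, \z) = \|\z - \x\|^p \le \big(\|\y - \x\| + \|\z - \y\|\big)^p$.

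Next I would control the right-hand side by convexity. Writing $a = \|\y - \x\|$ and $b = \|\z - \y\|$, the convexity of $t \mapsto t^p$ on $[0, \infty)$ for $p \ge 1$ — equivalently, Jensen's inequality applied to the two points $a, b$ with equal weights $\frac{1}{2}$ — yields $\big(\frac{a+b}{2}\big)^p \le \frac{1}{2}(a^p + b^p)$, that is $(a+b)^p \le 2^{p-1}(a^p + b^p)$. Substituting back and recalling $L_p(\x, \y) = a^p$ and $L_p(\y, \z) = b^p$ produces exactly $L_p(\x, \z) \le 2^{p-1}\big(L_p(\x, \y) + L_p(\y, \z)\big)$, which is the claim.

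I do not expect a genuine obstacle here: the argument is just a combination of the monotonicity and the convexity of $t \mapsto t^p$. The only step worth isolating is the scalar power-mean inequality $(a+b)^p \le 2^{p-1}(a^p + b^p)$; if one prefers to avoid citing convexity directly, it can be verified by setting $s = a/(a+b)$ (the degenerate case $a=b=0$ being trivial) and using that $s^p + (1-s)^p \ge 2^{1-p}$ for $s \in [0,1]$ and $p \ge 1$, since $s \mapsto s^p + (1-s)^p$ is convex and minimized at $s = \frac{1}{2}$. The convexity route, however, is the cleanest.
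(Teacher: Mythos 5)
Your proof is correct and follows essentially the same route as the paper's: both rest on the decomposition $\x - \z = (\x - \y) + (\y - \z)$ together with the convexity of $t \mapsto t^p$ (the paper folds the triangle inequality and the scalar bound into one step by writing $L_p(\x,\z) = 2^p \| \frac{\x-\y}{2} + \frac{\y-\z}{2}\|^p$ and invoking convexity, whereas you separate them, which is if anything slightly more careful). You also correctly read the exponent $2^{q-1}$ in the statement as a typo for $2^{p-1}$, which is what the paper's own proof establishes.
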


\begin{proof}
Observe that
\begin{equation*}
	L_p(\x, \z) = 2^p  \Big\| \frac{\x - \y}{2} + \frac{ \y - \z}{2} \Big\|^p.
\end{equation*}
For $p \geq 1$, $x \mapsto x^p$ is convex, thus,
\begin{equation*}
L_p(\x, \z) \leq 2^p  \frac{1}{2} \Big[ \| (\x - \y) \|^p +
\| (\y - \z) \|^p \Big] 
= 2^{p -  1} [ L_p(\x, \z) + L_p(\y, \z) ],
\end{equation*}
which concludes the proof.
\end{proof}

\begin{lemma}
\label{lemma:muadmissible}
Assume that $L_p(h(x), y) \leq M$ for all $x \in \cX$ and $y \in \cY$,
then $L_p$ is $\mu$-admissible with $\mu = p M^{p - 1}$.
\end{lemma}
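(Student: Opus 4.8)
The plan is to reduce the $\mu$-admissibility inequality to a one-dimensional statement about the map $t \mapsto t^p$ on the nonnegative reals. Fixing $(x,y) \in \cX \times \cY$ and $h, h' \in H$, I would set $a = \abs{h(x) - y}$ and $b = \abs{h'(x) - y}$, so that $L_p(h(x), y) = a^p$ and $L_p(h'(x), y) = b^p$, and the quantity to control is $\abs{a^p - b^p}$.

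The key inequality I would establish is that for all $a, b \geq 0$ and $p \geq 1$,
\[
\abs{a^p - b^p} \leq p \max(a, b)^{p-1} \abs{a - b}.
\]
Assuming without loss of generality $a \geq b$, this follows by writing $a^p - b^p = \int_b^a p\, t^{p-1}\, dt$ and bounding the integrand, which is non-decreasing for $p \geq 1$, by its value at the upper endpoint $p\, a^{p-1}$. Working through this integral representation, rather than applying the mean value theorem to $t \mapsto \abs{t}^p$, has the advantage of covering the case $p = 1$ uniformly, where $t \mapsto \abs{t}^p$ fails to be differentiable at the origin.

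Next I would relate the two factors on the right-hand side back to the hypotheses. For the difference, the reverse triangle inequality gives $\abs{a - b} = \big\lvert \abs{h(x) - y} - \abs{h'(x) - y} \big\rvert \leq \abs{h(x) - h'(x)}$, which is exactly the factor appearing in the definition of $\mu$-admissibility. For the leading factor, the boundedness assumption yields $\max(a, b) \leq M$, hence $\max(a, b)^{p-1} \leq M^{p-1}$. Combining the three bounds gives $\abs{L_p(h(x), y) - L_p(h'(x), y)} \leq p M^{p-1} \abs{h(x) - h'(x)}$, which is the claim with $\mu = p M^{p-1}$.

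The main point to be careful about is matching the exponent in the constant to the exact form of the boundedness hypothesis: the clean constant $p M^{p-1}$ arises precisely when $M$ bounds $\abs{h(x) - y}$, so I would read the assumption accordingly and verify that $\max(a,b) \leq M$ for all $h, h' \in H$ and all $(x,y)$. A secondary, purely technical obstacle is the non-smoothness of $t \mapsto \abs{t}^p$ for $p = 1$; the integral form of the key inequality above handles this without a separate case analysis, which is why I would favor it over a derivative-based argument.
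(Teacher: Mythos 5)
Your proof is correct and takes essentially the same route as the paper's: the paper normalizes by $M$ and invokes the $p$-Lipschitzness of $t \mapsto t^p$ on $[0,1]$, which is exactly your integral bound $|a^p - b^p| \leq p \max(a,b)^{p-1}|a-b|$ in disguise, followed by the same reverse-triangle-inequality step $\big||h(x)-y|-|h'(x)-y|\big| \leq |h(x)-h'(x)|$. Your caveat about the form of the boundedness hypothesis is also well taken: the paper's own proof implicitly reads the assumption as $|h(x)-y| \leq M$ (it divides $|h(x)-y|$ by $M$ and treats the ratio as lying in $[0,1]$), which is precisely what makes the constant come out as $pM^{p-1}$ rather than the $pM^{(p-1)/p}$ that the literal hypothesis $L_p(h(x),y) \leq M$ would give.
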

\begin{proof}
Since $x \mapsto x^p$ is $p$-Lipschitz over $[0,1]$ we can write
\begin{align*}
| L(h(x),y) - L(h'(x), y) | 
& = M^p \bigg|\Big(\frac{|h(x) - y|}{M}\Big)^p  -
\Big(\frac{|h'(x) - y|}{M}\Big)^p \bigg|\\
& \leq p M^{p-1}|h(x) - y + y - h'(x)|  \\
& = p M^{p-1} |h(x) - h'(x)|,
\end{align*}
which concludes the proof.
\end{proof}

\begin{lemma}
\label{lemma:holder}
Let $L$ be the $L_p$ loss for some $p \geq 1$ and let $h, h', h''$ be
functions satisfying $L_p(h(x), h'(x)) \leq M$ and $L_p(h''(x),
h'(x)) \leq M$ for all $x \in \cX$, for some $M \geq 0$. Then, for
any distribution $\cD$ over $\cX$, the following inequality holds:
\begin{equation}
| \cL_\cD(h, h') - \cL_\cD(h'', h') | \leq p M^{p-1}[\cL_\cD(h, h'')]^{\frac{1}{p}}.
\end{equation}
\end{lemma}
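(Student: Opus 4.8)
The plan is to move the absolute value inside the expectation defining $\cL_\cD$ and then control the resulting integrand pointwise, reserving a single Jensen step for the end. Writing $\cL_\cD(h,h') = \E_{x \sim \cD}[|h(x)-h'(x)|^p]$ and similarly for $\cL_\cD(h'',h')$, the triangle inequality for the expectation gives $|\cL_\cD(h,h') - \cL_\cD(h'',h')| \leq \E_{x \sim \cD}\big[\, \big| |h(x)-h'(x)|^p - |h''(x)-h'(x)|^p \big| \,\big]$. This reduces the problem to a pointwise comparison of the two $L_p$ values at each $x$, treating $h'(x)$ as a common ``label.''

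For the pointwise bound, I would invoke the $\mu$-admissibility of $L_p$ established in Lemma~\ref{lemma:muadmissible}. Reading that lemma with the label $y = h'(x)$ and the two competing hypotheses $h$ and $h''$, the hypotheses $L_p(h(x),h'(x)) \leq M$ and $L_p(h''(x),h'(x)) \leq M$ are precisely what is needed to license the admissibility constant $\mu = p M^{p-1}$, yielding $\big| |h(x)-h'(x)|^p - |h''(x)-h'(x)|^p \big| \leq p M^{p-1} |h(x) - h''(x)|$ for every $x \in \cX$. Taking expectations then gives $|\cL_\cD(h,h') - \cL_\cD(h'',h')| \leq p M^{p-1}\, \E_{x\sim\cD}\big[|h(x)-h''(x)|\big]$.

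The final step converts the $L_1$-type quantity $\E_{x\sim\cD}[|h(x)-h''(x)|]$ into the $L_p$-type quantity $[\cL_\cD(h,h'')]^{1/p} = \big(\E_{x\sim\cD}[|h(x)-h''(x)|^p]\big)^{1/p}$ appearing in the statement. Since $p \geq 1$, the map $t \mapsto t^{1/p}$ is concave on $\Rset_+$, so Jensen's inequality gives $\E_{x\sim\cD}[|h(x)-h''(x)|] = \E_{x\sim\cD}\big[(|h(x)-h''(x)|^p)^{1/p}\big] \leq \big(\E_{x\sim\cD}[|h(x)-h''(x)|^p]\big)^{1/p}$, which is exactly $[\cL_\cD(h,h'')]^{1/p}$. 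Combining the three displays yields the claim.

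I expect the only genuine subtlety to be bookkeeping rather than a real obstacle: one must verify that the boundedness hypotheses are stated for exactly the pairs $(h,h')$ and $(h'',h')$ that enter the pointwise admissibility estimate, so that the constant $\mu = p M^{p-1}$ is legitimate uniformly in $x$. The Jensen step is the only place where $p \geq 1$ (rather than merely $p > 0$) is actually used, and it is what produces the exponent $1/p$ on the right-hand side.
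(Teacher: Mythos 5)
Your proof is correct and takes essentially the same route as the paper's: the paper likewise applies the pointwise admissibility estimate $\big| L_p(h(x), h'(x)) - L_p(h''(x), h'(x)) \big| \leq p M^{p-1} |h(x) - h''(x)|$ (``proceeding as in the proof of Lemma~\ref{lemma:muadmissible}'', with $h'(x)$ in the role of the label $y$) and then uses Jensen's inequality, exactly as you do, to pass from $\E_{x \sim \cD}\big[|h(x) - h''(x)|\big]$ to $[\cL_\cD(h, h'')]^{\frac{1}{p}}$. The only cosmetic difference is that you explicitly move the absolute value inside the expectation via the triangle inequality before the pointwise step, whereas the paper bounds $\big| \E_{x \sim \cD}[\cdot] \big|$ directly.
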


\begin{proof}
Proceeding as in the proof of Lemma~\ref{lemma:muadmissible}, we obtain
\begin{align*}
| \cL_\cD(h, h') - \cL_\cD(h'', h') |
& = | \E_{x \in \cD}\big[L_p(h(x),
h'(x)) - L_p(h''(x), h'(x)\big]  | \\
& \leq p M^{p-1} \E_{x \in \cD} \big[|h(x) - h''(x)| \big].
\end{align*}
Since $p \geq 1$, by Jensen's inequality, we can write $\E_{x \in \cD}
\big[|h(x) - h''(x)| \big] \leq \E_{x \in \cD} \big[|h(x) - h''(x)|^p
\big]^{1/p} = [\cL_\cD(h, h'')]^{\frac{1}{p}}$.
\end{proof}
\section{Experiments}
\label{app:experiments}
Here we report the results of all pairs of adaptation problems for the
image task and the sentiment task. Each row of the plot corresponds
to a different source domain and each column in the plot corresponds
to a different target domain. The results reported here are the mean
performance of the same experiment repeated 10 times. The error bars
represent 1 standard deviation. 
\begin{figure}[t]
\centering
\includegraphics[scale=.9]{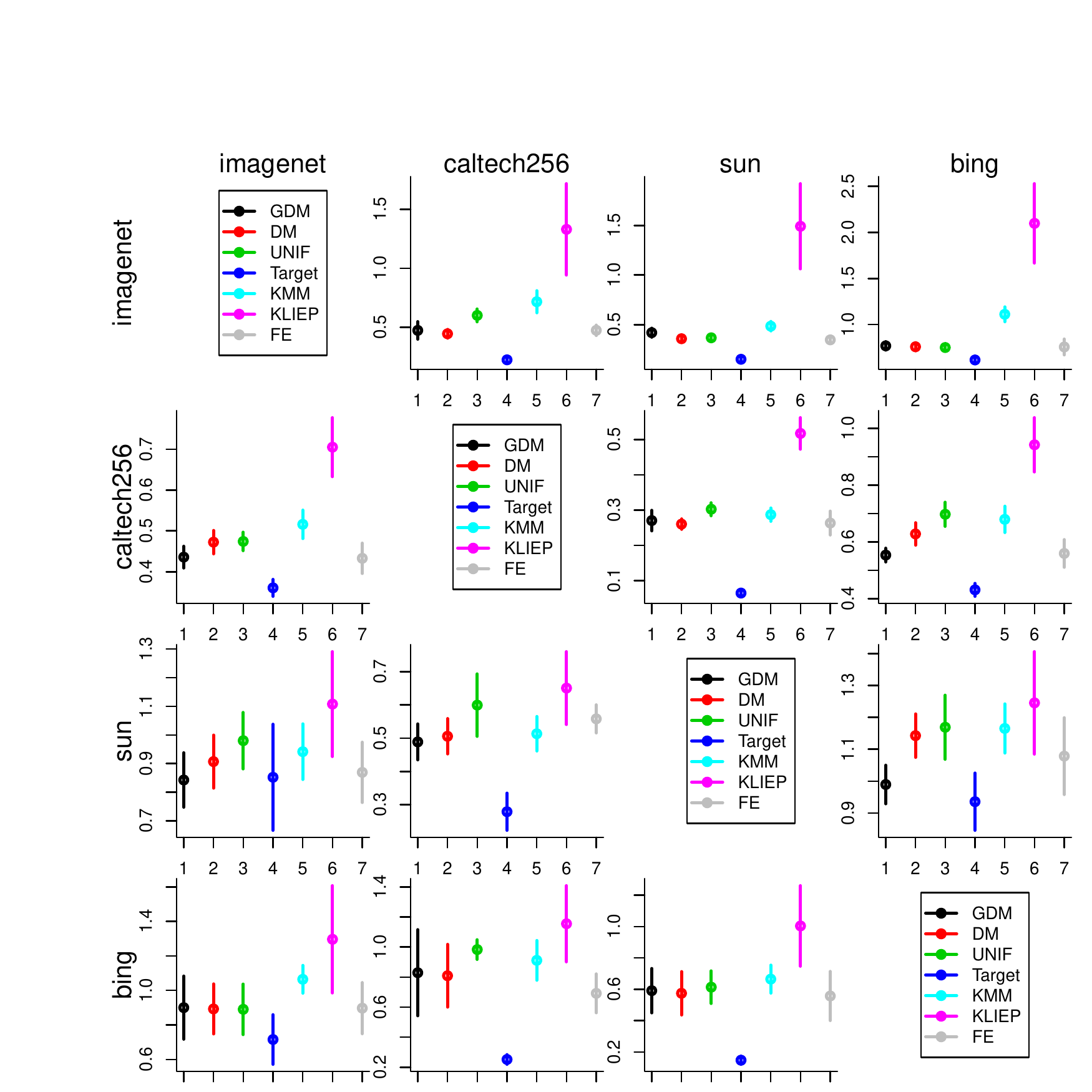}
\caption{Adaptation results for the image data set}
\end{figure}
\begin{figure}[t]
\centering
\includegraphics[scale=.9]{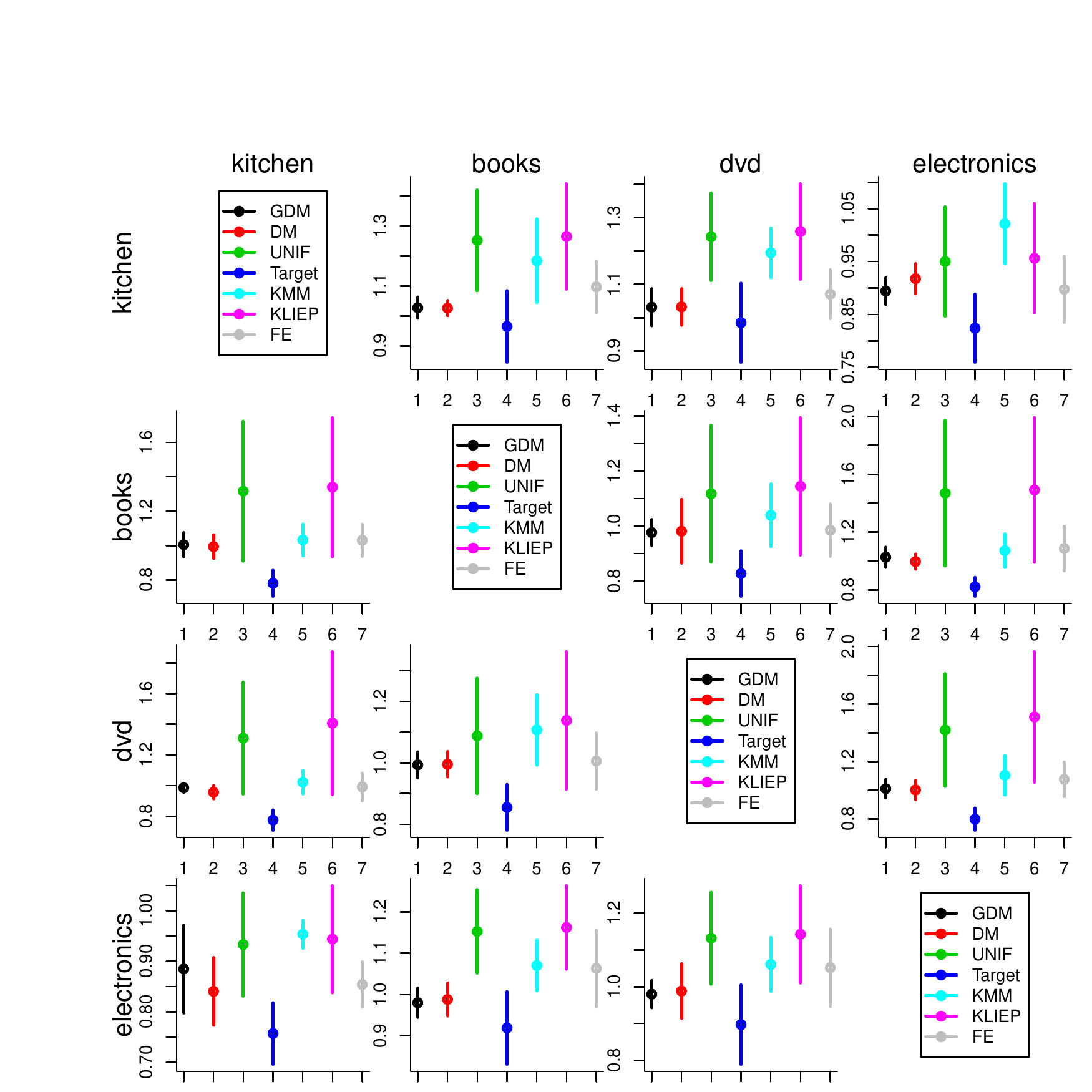}
\caption{Adaption results for the sentiment data set}
\end{figure}

\end{document}